\documentclass[sigconf]{aamas} 

\usepackage{balance}
\usepackage{amsmath}
\usepackage{nicefrac}
\usepackage{xcolor}
\usepackage{graphicx}
\usepackage{caption}
\usepackage{subfig}
\usepackage{newtxtext}
\usepackage{xspace}
\usepackage{cleveref}
\usepackage{algorithm}
\usepackage[noend]{algorithmic}
\usepackage{amsthm}
\usepackage{multirow}
\usepackage{dblfloatfix}

\newcommand{\shivaram}[1]{\textcolor{red}{#1}}
\newcommand{\MM}{\ensuremath{\mathcal M}\xspace}

\newtheorem{theorem}{Theorem}

\newtheorem{proposition}[theorem]{Proposition}
\newtheorem{lemma}[theorem]{Lemma}

\newtheorem{definition}[theorem]{Definition}

\DeclareMathOperator*{\argmax}{argmax}
\DeclareMathOperator*{\argmin}{argmin}
\DeclareMathOperator*{\eqdef}{\stackrel{\text{\tiny def}}{=}}
\Crefname{algorithm}{Algorithm}{Algorithms}

\doi{PYUD1139}

\makeatletter
\gdef\@copyrightpermission{
  \begin{minipage}{0.2\columnwidth}
   \href{https://creativecommons.org/licenses/by/4.0/}{\includegraphics[width=0.90\textwidth]{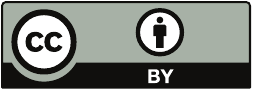}}
  \end{minipage}\hfill
  \begin{minipage}{0.8\columnwidth}
   \href{https://creativecommons.org/licenses/by/4.0/}{This work is licensed under a Creative Commons Attribution International 4.0 License.}
  \end{minipage}
  \vspace{5pt}
}
\makeatother

\setcopyright{ifaamas}
\acmConference[AAMAS '26]{Proc.\@ of the 25th International Conference
on Autonomous Agents and Multiagent Systems (AAMAS 2026)}{May 25 -- 29, 2026}
{Paphos, Cyprus}{C.~Amato, L.~Dennis, V.~Mascardi, J.~Thangarajah (eds.)}
\copyrightyear{2026}
\acmYear{2026}
\acmDOI{}
\acmPrice{}
\acmISBN{}

\acmSubmissionID{1043}

\title{On-line Learning in Tree MDPs by Treating~Policies as\\Bandit Arms}

\author{Anvay Shah}
\affiliation{
  \institution{Indian Institute of Technology Bombay}
  \city{Mumbai}
  \country{India}}
\email{anvay@cse.iitb.ac.in}

\author{Ramsundar Anandanarayanan}
\affiliation{
  \institution{Indian Institute of Technology Bombay}
  \city{Mumbai}
  \country{India}}
\email{ramsundar@cse.iitb.ac.in}

\author{Sharayu Moharir}
\affiliation{
  \institution{Indian Institute of Technology Bombay}
  \city{Mumbai}
  \country{India}}
\email{sharayum@ee.iitb.ac.in}

\author{Shivaram Kalyanakrishnan}
\affiliation{
  \institution{Indian Institute of Technology Bombay}
  \city{Mumbai}
  \country{India}}
\email{shivaram@cse.iitb.ac.in}

\begin{abstract}
A Tree Markov Decision Problem (T-MDP) is a finite-horizon MDP with a starting state $s_{1}$, in which every state is reachable from $s_{1}$ through exactly one state-action trajectory. T-MDPs arise naturally as abstractions of decision making in sequential games with perfect recall, against stationary opponents. We consider the problem of on-line learning in T-MDPs, both in the PAC and the regret-minimisation regimes. We show that well-known bandit algorithms---\textsc{Lucb} and \textsc{Ucb}---can be applied on T-MDPs by treating each policy as an arm. The apparent technical challenge in this approach is that the number of policies is exponential in the number of states. Our main innovation is in the design of confidence bounds based on data shared by the policies, so that the bandit algorithms can yet be implemented with polynomial memory and per-step computation. We obtain instance-dependent upper bounds on sample complexity and regret that sum a ``gap term'' from every terminal state, rather than every policy. Empirically, our algorithms consistently outperform available alternatives on a suite of hidden-information games.
\end{abstract}

\keywords{MDPs; Imperfect Information Games; PAC; Regret.}

\newcommand{\BibTeX}{\rm B\kern-.05em{\sc i\kern-.025em b}\kern-.08em\TeX}

\begin{document}


\pagestyle{fancy}
\fancyhead{}


\maketitle 


\section{Introduction}
A chief contributor to the rapid advent of artificial intelligence (AI) in the last couple of decades is the widespread adoption of data-driven algorithms. In the realm of decision making, major practical successes---in areas such as robotics~\cite{DBLP:journals/ijrr/KoberBP13}, game-playing~\cite{DBLP:journals/nature/SilverSSAHGHBLB17, DBLP:journals/nature/WurmanBKMS0CDE022}, large language models~\cite{DBLP:conf/nips/Ouyang0JAWMZASR22, DBLP:journals/corr/abs-2501-12948}, financial trading~\cite{DBLP:conf/nips/MoodyS98}---have been achieved through reinforcement learning (RL)~\cite{DBLP:books/lib/SuttonB2018}. The theoretical study of RL~\cite{DBLP:conf/colt/Fiechter94, DBLP:journals/ml/KearnsS02, DBLP:journals/ml/AzarMK13} has regularly benefitted its empirical progress.

In this paper, we consider on-line learning on sequential decision making problems that can be formalised as Tree Markov Decision Problems (T-MDPs). A T-MDP is a finite-horizon MDP in which the state-transitions form a \textit{tree}, rooted at a starting state $s_1$. Thus, every state in a T-MDP is reachable from $s_1$ through exactly one state-action trajectory. T-MDPs arise commonly as abstractions of extensive-form games with perfect recall~\cite{shoham2008multiagent}, against fixed opponents. The key challenge in these games is their large scale (amplified by hidden information) and the uncertainty regarding the opponent's strategy. Our work is motivated by such games, where the goal is to learn a {\em best response} against an {\em unknown} but stationary opponent, through repeated game interactions. This setup gives rise to an on-line learning problem in a T-MDP, in which the agent's states are action-observation histories.

A natural approach for designing learning algorithms for MDPs is to view them as generalisations of multi-armed bandits, their well-understood stateless counterparts~\cite{DBLP:journals/jmlr/Even-DarMM06}. Can on-line learning algorithms for bandits be appropriately generalised for MDPs? Simply put, our paper is the substantiation of an affirmative answer for the special case of T-MDPs. We take up two well-known bandit algorithms: (1) \textsc{Lucb} \cite{DBLP:conf/icml/KalyanakrishnanTAS12}, which achieves order-optimal sample complexity in the PAC setting, and (2) \textsc{Ucb} \cite{DBLP:journals/ml/AuerCF02}, whose regret is within a constant factor of optimal. We adapt these algorithms to T-MDPs, and denote the generalisations \textsc{Lucb-T} and \textsc{Ucb-T}.

The generalisations share a common core, in which each policy for the T-MDP is treated as a bandit arm. The main technical challenge is that the number of policies is \textit{exponential} in the size of the T-MDP. We propose a framework for policies to share data, tying it to a concentration inequality that we specifically establish for certain families of dependent random variables. We obtain instance-specific upper bounds on the sample complexity (for \textsc{Lucb-T}) and regret (for \textsc{Ucb-T}), both of which include a ``gap term'' for each terminal state in the T-MDP (whereas a na{\"i}ve implementation would yield such a term for every policy). The tree structure also facilitates efficient computation: these algorithms need only a polynomial number of operations to make each decision. 


Our algorithms assume that the reward function of the T-MDP is \textit{known}, and that the only parameters to estimate are transition probabilities. While restrictive in some cases, this assumption is reasonable for our motivating domain of imperfect information games. We implement \textsc{Lucb-T} and \textsc{Ucb-T} on three games with varying state-space sizes:  Kuhn Poker \cite{kuhnpoker} (10's of states), Leduc Poker \cite{BayesBluffOppModelling} (100's of states), and Reconnaissance Blind Tic-Tac-Toe (RBT) \cite{rbt} (millions of states). RBT is inspired by Reconnaissance Blind Chess \cite{pmlr-v220-gardner23a, pmlr-v176-perrotta22a}, and is significantly larger than test problems considered in the literature. We observe that our algorithms consistently outperform alternatives, especially as the problem size increases. Our experiments also throw light on gaps between theory and practice, especially in the PAC setting.
In summary, our contributions include
\begin{enumerate}
     \item a new analytical result (Section~\ref{sec:confidencebounds}) with potentially broader applicability to the analysis of learning in MDPs;
     \item conceptually-simple algorithms for T-MDPs (Section~\ref{sec:algorithmsandnalysis});
     \item strong theoretical guarantees in the form of instance-specific upper bounds on sample complexity (Theorem~\ref{thm:lucb-t-sc}) and regret (Theorem~\ref{thm:ucb-t-regret}); and 
     \item empirical validation on large T-MDPs, arising from well-known imperfect-information games (Section~\ref{sec:experiments}).
 \end{enumerate}
We also publish our code\footnote{Codebase: \url{https://github.com/anvay09/On-line-Learning-in-Tree-MDPs}.}, so our 
work can be reproduced and built upon. We begin by formalising our problem statements.

\section{Problem Description}

In this section, we define Tree MDPs, and thereafter specify the PAC and regret-minimisation problems.

\subsection{Tree Markov Decision Problems}

\begin{figure}[b]
    \centering
\vspace{-0.4cm}
\includegraphics[width=0.7\linewidth]{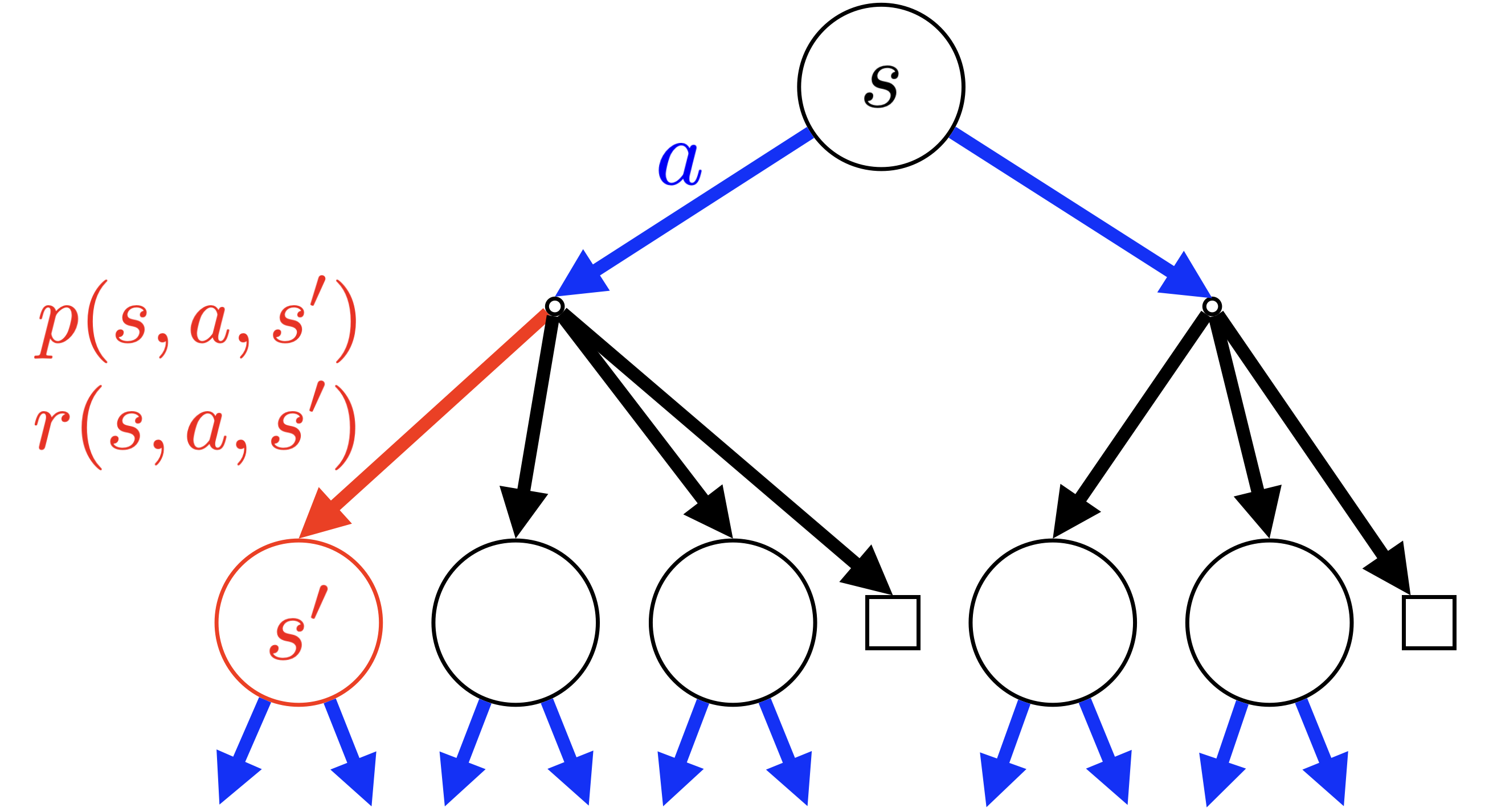}
 \vspace{-0.2cm}
    \caption{Transition from non-terminal state $s$ to state $s'$ upon taking action $a$. State $s^{\prime}$ could be non-terminal or terminal.}
    \label{fig:state}
    \Description{This image shows an example Tree MDP state.}
\end{figure}

A Tree MDP (T-MDP) $\mathcal{M}$ is specified by a 7-tuple $\langle \mathcal{S},\Sigma,\mathcal{A},p,r,\gamma, \mathcal{H} \rangle$. Here $\mathcal{S}$ and $\Sigma$ are sets of non-terminal and terminal states, respectively. We assume that $\mathcal{S}$ contains a starting state $s_{1}$. $\mathcal{A}$ is a set of actions, and $\mathcal{H} \geq 1$ is the horizon. Transition probabilities are given by 
the function $p$,
while the function $r$
specifies rewards. $\gamma \in [0, 1]$ is used to discount future rewards in the definition of values.


A T-MDP induces a tree rooted at $s_{1}$, which is the only state at ``level'' $1$. As illustrated in Figure~\ref{fig:state}, suppose $s \in \mathcal{S}$ is a state at level $1 \leq h \leq \mathcal{H}$. When the agent takes action $a$ from $s$, it goes to state $s^{\prime}$, which is either (1) terminal, or (2) a non-terminal state at level $h + 1$. The probability of this transition is $p(s, a, s^{\prime})$. It is a property of T-MDPs that each ``next state'' $s^{\prime}$ has exactly one parent $s$ from one lower level, from which $s^{\prime}$ is reachable through exactly one action $a$.

The transition from $s$ to $s^{\prime}$ by taking action $a$ merits a numeric reward $r(s, a, s^{\prime})$. We assume that this reward is known---an assumption that is reasonable for many practical applications \cite{DBLP:conf/nips/NgKJS03,DBLP:journals/nature/WurmanBKMS0CDE022}, particularly those from our motivating domain of games \cite{SHEPPARD2002241, DBLP:journals/corr/MnihKSGAWR13,  DBLP:conf/nips/TianGJ20}. This requirement is not uncommon in the on-line learning literature \cite{DBLP:journals/tcs/LattimoreH14a}, although many approaches can also accommodate stochastic and unknown rewards \cite{DBLP:conf/colt/WagenmakerSJ22, DBLP:conf/alt/TirinzoniMK23}. We assume that the discounted cumulative reward (that is, the \textit{return}) along each trajectory is bounded, and for convenient exposition taken to lie in $[0, 1]$.


A deterministic policy 
$\pi: \mathcal{S} \to \mathcal{A}$ specifies an action $\pi(s)$ for every non-terminal state $s$. Let $\Pi$ be the set of all deterministic policies for MDP $\mathcal{M}$. For every non-terminal state $s \in \mathcal{S}$, the \textit{value} under $\pi$ is given by the Bellman equation
\begin{align}
V^{\pi}(s) = \sum_{s^{\prime} \in \mathcal{S} \cup \Sigma} p(s, \pi(s), s^{\prime}) \left( r(s, \pi(s), s^{\prime}) + \gamma V^{\pi}(s^{\prime})\right),
\label{eqn:bellman}
\end{align}
with the convention that for every terminal state $\sigma \in \Sigma$, $V^{\pi}(\sigma) = 0$. The tree structure of $\mathcal{M}$ enables $\pi$ to be conveniently evaluated bottom-up. A policy $\pi$
that maximises the RHS in \eqref{eqn:bellman} for each state $s \in \mathcal{S}$ is an \textit{optimal} policy. Let $\Pi_{\text{opt}} \subseteq \Pi$ to be the set of optimal policies. We arbitrarily pick one optimal policy and denote it  $\pi^{\star}$.
Of particular interest to us is the value of policies at the start state $s_{1}$. For $\pi \in \Pi$, we define $V(\pi) \eqdef V^{\pi}(s_{1})$.

\subsection{On-line Learning Problems}

Our agent faces the challenge of not knowing the transition function $p$. However, the agent can interact with the MDP by playing complete episodes starting at $s_{1}$. Suppose that on episode $t \geq 1$, the agent plays a policy $\pi^{t}$. From each state $s$ that is reached, an action is taken according to $\pi^{t}$. The environment generates next state $s^{\prime}$ by sampling $p(s, \pi^{t}(s))$. If $s^{\prime}$ is terminal, we proceed to the next episode $t + 1$, wherein the agent can select a fresh policy $\pi^{t + 1}$. Note that the agent does \textit{not} have arbitrary access to sample any state-action pair in the tree, as is assumed in some other work~\cite{DBLP:journals/ml/AzarMK13,DBLP:conf/nips/ZanetteKB19}. Thus, if the agent wishes to explore some particular state, it may have to try for multiple episodes until randomness takes it down that state's path.

For each episode $t \geq 1$, a learning algorithm $\mathcal{L}$ must pick a policy $\pi^{t}$, based on the trajectories observed in the preceding $t - 1$ episodes. Recall that each trajectory is a state-action sequence beginning with $s_{1}$, and ending in a terminal state. Additionally, in the PAC setting, the learning algorithm may stop after any number of episodes $t$ and return $\pi^{t}$ as answer.

\subsubsection{PAC} In the PAC setting, a tolerance $\epsilon \in (0, 1)$ and a mistake probability $\delta \in (0, 1)$ are given as input to learning algorithm $\mathcal{L}$. The algorithm is required to stop with probability $1$ on every input T-MDP. Also, the policy it returns must satisfy $V(\pi) \geq V(\pi^{\star}) - \epsilon$ with probability at least $1 - \delta$. The main question is how many samples (that is, episodes) are required to provide such a guarantee. The PAC problem described above is one of ``pure exploration'', in the sense that the rewards accrued while learning are not of consequence.

\subsubsection{Regret.} The regret of an algorithm after $T$ episodes is $$R_{T} \eqdef \sum_{t = 1}^{T} \left( V(\pi^{\star}) - \mathbb{E}[V (\pi^{t})] \right).$$ We desire an algorithm that minimises regret on every T-MDP instance. Unlike the PAC setting, it is well-known that regret-minimisation needs to balance exploring (playing less-sampled policies) and exploiting (playing empirically-dominant policies).\\

From the specifications above, it is apparent that bandit algorithms such as \textsc{Lucb} and \textsc{Ucb} can be run ``as is'' on T-MDPs, by treating each policy as an arm. However, since the number of policies is $|\mathcal{A}|^{|\mathcal{S}|}$, the computational and memory requirements of na{\"i}ve implementations would be prohibitive on all but toy tasks. Our main contribution is an efficient, scalable implementation that exploits 
the structure of T-MDPs---and performs well both in theory and in practice. These algorithms scale polynomially in the number of possible trajectories from the starting state, which, in T-MDPs, is the same as the number of terminal states. In regular MDPs, the number of trajectories could, in general, be exponential in the number of states.

\section{Confidence Bounds on $V(\pi)$}
\label{sec:confidencebounds}

The main tool we devise is a confidence bound on $V(\pi)$ for arbitrary policy $\pi \in \Pi$. Facilitating this bound is a representation of $V(\pi)$ as a convex combination of terminal returns.

\subsection{Bottom-up View of $V(\pi)$}

The Bellman equations in \eqref{eqn:bellman} recursively define a parent node's value in terms of its children's. We introduce notation to unroll this recursion, up to the base case involving only terminal states.

Due to the tree property of T-MDPs, every state $s \in \mathcal{S} \cup \Sigma$ has a unique path from the root $s_{1}$; suppose for $s$ the path is $s_{1}, a_{1}, s_{2}, a_{2}, \dots, s_{m}, a_{m}, s$ for some $m \geq 1$. We denote by $\rho(s)$ the discounted cumulative reward (that is, the return) along this path, and by $q(s)$ the probability of reaching $s$ from $s_{1}$ if taking action $a_{i}$ from $s_{i}$ for $1 \leq i \leq m$. That is:
\begin{align}
\rho(s) \eqdef \sum_{i = 1}^{m} \gamma^{i - 1} r(s_{i}, a_{i}, s_{i + 1});\phantom{aaaa} 
q(s) \eqdef \prod_{i = 1}^{m} p(s_{i}, a_{i}, s_{i + 1}), 
\label{eqn:qrho}
\end{align}
where $s_{m + 1}$ is taken to be $s$. 

A policy $\pi \in \Pi$ is defined to be \textit{consistent} with terminal state $\sigma \in \Sigma$ if $\pi$ takes the action following each state in the path to $\sigma$.
Concretely, suppose $\sigma$ has path $(s_{1}, a_{1}, s_{2}, a_{2}, \dots, s_{m}, a_{m}, \sigma)$. Then $\pi \in \Pi$ is consistent with $\sigma$ if and only if for $1 \leq i \leq m$, $\pi(s_{i}) = a_{i}$. Observe that in general, a policy $\pi$ can be consistent with multiple terminal states; on any episode one of these states will be reached. On the other hand, also note that multiple policies can be consistent with a given terminal state $\sigma$---such policies would differ on states that are not on the path to $\sigma$. For $\pi \in \Pi$, let $X(\pi)$ denote the subset of terminal states with which $\pi$ is consistent; similarly, for $\sigma \in \Sigma$, let $Y(\sigma)$ denote the set of policies that are consistent with $\sigma$:
\begin{align}
X(\pi) = \{\sigma \in \Sigma: \pi \text{ is consistent with  } \sigma \},    
\label{eqn:X-def}
\\
Y(\sigma) = \{\pi \in \Pi: \pi \text{ is consistent with  } \sigma  \}.    
\label{eqn:Y-def}
\end{align}
Repeated expansion of the RHS of \eqref{eqn:bellman}, while invoking definitions from \eqref{eqn:qrho}, yields the following decomposition of $V(\pi)$; a detailed working is provided in Appendix~\ref{app:proof-v-terminal}.\footnote{Appendices are included in a longer version of the paper linked
from SK’s home page: \url{https://www.cse.iitb.ac.in/~shivaram/}.}
\begin{proposition}[Value as a convex combination of terminal returns]
\label{prop:v-terminal}
For $\pi \in \Pi$,    
\begin{align}
V^{\pi}(s_{1}) = \sum_{\sigma \in X(\pi)} q(\sigma) \cdot \rho(\sigma).   
\end{align}
\end{proposition}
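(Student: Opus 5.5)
We prove a slightly stronger statement by backward induction on the tree, and then specialise it to $s = s_1$. For a non-terminal state $s$ whose path from the root is $s_1, a_1, \dots, s_m, a_m, s$, write $\ell(s) = m$ for its depth, so that $\ell(s_1) = 0$. Let $X_s(\pi)$ be the set of terminal states $\sigma$ in the subtree of $s$ such that $\pi$ takes the path action at every state on the path from $s$ to $\sigma$. The claim is
\begin{align*}
q(s)\left(\rho(s) + \gamma^{\ell(s)} V^{\pi}(s)\right) = \sum_{\sigma \in X_s(\pi)} q(\sigma)\,\rho(\sigma),
\end{align*}
with the conventions $q(s_1) = 1$ and $\rho(s_1) = 0$. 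At $s = s_1$ the left side is exactly $V^{\pi}(s_1)$. Moreover $X_{s_1}(\pi) = X(\pi)$, because consistency with $\sigma$ means agreeing with the path actions from $s_1$. So the proposition follows.

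The induction runs over states from the deepest level (at most $\mathcal{H}$) upward. This is well founded because every child of a non-terminal state is either terminal or one level deeper. Fix $s$ and put $a = \pi(s)$.

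\textbf{Identities for a child.} For each child $s'$ reached from $s$ via $a$, the definitions in \eqref{eqn:qrho} give
\begin{align*}
q(s') = q(s)\,p(s,a,s'), \qquad \rho(s') = \rho(s) + \gamma^{\ell(s)} r(s,a,s').
\end{align*}
Here we use that the path to $s'$ is the path to $s$ extended by $(a, s')$; this holds by the tree property.

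\textbf{Expanding the left side.} Substitute the Bellman equation \eqref{eqn:bellman} for $V^{\pi}(s)$. Since $\sum_{s'} p(s,a,s') = 1$, the term $\rho(s)$ can be moved inside the sum, which gives
\begin{align*}
\sum_{s'} q(s)\,p(s,a,s')\left(\rho(s) + \gamma^{\ell(s)} r(s,a,s') + \gamma^{\ell(s)+1} V^{\pi}(s')\right).
\end{align*}
By the child identities, each summand equals $q(s')\left(\rho(s') + \gamma^{\ell(s')} V^{\pi}(s')\right)$, where $\ell(s') = \ell(s)+1$.

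\textbf{Terminal and non-terminal children.} If $s'$ is terminal, then $V^{\pi}(s') = 0$ and the summand is $q(s')\rho(s')$. This is the contribution of $\sigma = s'$, and $s' \in X_s(\pi)$ because the only action on the path from $s$ to $s'$ is $a = \pi(s)$. If $s'$ is non-terminal, the induction hypothesis turns the summand into $\sum_{\sigma \in X_{s'}(\pi)} q(\sigma)\rho(\sigma)$.

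\textbf{Set decomposition.} It remains to show that $X_s(\pi)$ is the disjoint union, over children $s'$ of $(s, \pi(s))$, of the sets $\{s'\}$ for terminal $s'$ and $X_{s'}(\pi)$ for non-terminal $s'$. Take $\sigma \in X_s(\pi)$. The first action on its path from $s$ must be $\pi(s)$, so the path passes through exactly one such child $s'$; uniqueness of paths gives disjointness. Conversely, any $\sigma$ in one of these sets lies in $X_s(\pi)$, since its path from $s$ begins with $\pi(s)$ and then agrees with $\pi$ from $s'$ onward.

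The only real obstacle is bookkeeping. The discount exponent must shift correctly: $\rho(s)$ and $\gamma^{\ell(s)}$ have to be included in the inductive statement, since using $V^{\pi}$ alone does not close the induction. The set decomposition must also genuinely use the tree property of T-MDPs.
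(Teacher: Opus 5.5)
Your proof is correct. It follows the same overall strategy as the paper, bottom-up induction over the tree driven by the Bellman equation, but uses a mirror-image inductive invariant.

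The paper re-roots the path quantities at each subtree, defining $q^{\bar s}$ and $\rho^{\bar s}$. It proves $V^{\pi}(\bar s) = \sum_{\sigma \in \Sigma} q^{\bar s}(\sigma)\rho^{\bar s}(\sigma)$, with $q^{\bar s}(\sigma) = 0$ for $\sigma$ off $\pi$'s reach. It peels off the \emph{first} edge of each path via $r(\bar s,\pi(\bar s),s') + \gamma \rho^{s'}(\sigma) = \rho^{\bar s}(\sigma)$ and $p(\bar s,\pi(\bar s),s')\, q^{s'}(\sigma) = q^{\bar s}(\sigma)$. You keep the root-based $q,\rho$ and peel off the \emph{last} edge, so you must carry the prefix $q(s)$, $\rho(s)$, $\gamma^{\ell(s)}$ in the invariant. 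The paper's invariant is literally the proposition applied to the sub-T-MDP rooted at $\bar s$, and the discounting takes care of itself. So your remark that $V^{\pi}$ alone cannot close the induction is true only for your global-quantity formulation.

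In exchange, your version is more self-contained. To move the reward inside the sum, the paper invokes $\sum_{\sigma} q^{s'}(\sigma) = 1$, i.e.\ that $\pi$'s reach probabilities over terminals sum to one. That fact needs its own small induction and is not proved there; you need only the one-step normalisation $\sum_{s'} p(s,a,s') = 1$. The paper's invariant plus that normalisation implies yours, via $q(\sigma) = q(s)q^{s}(\sigma)$ and $\rho(\sigma) = \rho(s) + \gamma^{\ell(s)}\rho^{s}(\sigma)$. You also handle terminal children and the decomposition of $X_s(\pi)$ explicitly. The paper leaves these to the conventions $V^{\pi}(\sigma) = 0$ and $q^{\bar s}(\sigma) = 0$, which it never spells out for terminal $s'$.
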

The proposition tells us that in order to learn $V(\pi)$, it suffices to estimate $q(\sigma)$ for all $\sigma \in X(\pi)$. This reduces the problem of estimating values for all policies in $\Pi$ (an exponentially-sized set) to estimating the ``$q$''-s for all the elements of $\Sigma$. Moreover, we can get data for estimating $q(\sigma)$ merely by playing any policy that is consistent with $\sigma$---which may or may not reach $\sigma$ on any given episode. On the other hand, if $\rho(\sigma)$ was also unknown and had to be estimated, note that we would get sufficient information for it only from episodes that do reach $\sigma$. Thus, our assumption of known rewards carries a non-trivial advantage.

\subsection{Upper and Lower Confidence Bounds}

Consider the run of any arbitrary algorithm. On the $t$-th episode, $t \geq 1$, for each terminal state $\sigma \in \Sigma$, let $n^{t}(\sigma)$ denote the number of episodes so far on which the policy played was consistent with $\sigma$, and let $n^{t}_{+}(\sigma)$ denote the number of times $\sigma$ was reached. These counts can be maintained using $\Theta(|\Sigma|$) integer operations per episode. If policy $\pi$ is played on the $t$-th episode, and reaches $\sigma$, the updates are 
\begin{align}
n^{t + 1}(\sigma^{\prime}) &\gets     n^{t}(\sigma^{\prime}) + 1, \text{ for all } \sigma^{\prime} \in X(\pi); \label{eq:n-update}\\ n^{t + 1}_{+}(\sigma) &\gets    n^{t}_{+}(\sigma) + 1, \label{eq:nplus-update}
\end{align}
while for other terminal states the counts do not change from episode $t$ to episode $t + 1$. With a slight abuse of notation, we define
\begin{align*}
n^{t}(\pi) \eqdef \min_{\sigma \in X(\pi)} n^{t}(\sigma)
\end{align*}
for $\pi \in \Pi$ as a ``play count'' signifying the amount of usable data we have for evaluating $\pi$ after $t - 1$ episodes. Note that $\pi$ need not be played at all for $n^{t}(\pi)$ to be positive---we only need policies consistent with states in $X(\pi)$ to have been played.

Notice that the ratio $\widehat{q}^{t}(\sigma) \eqdef \frac{n^{t}_{+}(\sigma)}{n^{t}(\sigma)}$ is an unbiased estimator of $q(\sigma)$; consequently the empirical value estimate 

\begin{align*}
\widehat{V}^{t}(\pi) &\eqdef \sum_{\sigma \in X(\pi)} \widehat{q}^{t}(\sigma) \cdot \rho(\sigma)   \end{align*}
is an unbiased estimator of $V(\pi)$. Due to our convention that $\rho(\cdot)$ lies in $[0, 1]$, and since $\widehat{q}^{t}(\cdot)$ must also lie in $[0, 1]$, it follows that $\widehat{V}^{t}(\pi)$ must lie in $[0, |X(\pi)|]$. As seen shortly, we clip this estimate to $[0, 1]$ for ``one direction'' of our algorithm and analysis.

At the core of our technical contribution are the following confidence bounds that we propose on $V(\pi)$. For $\delta \in (0, 1)$,
\begin{align}
\text{ucb}^{t}(\pi, \delta) &\eqdef \widehat{V}^{t}(\pi) + \beta(n^{t}(\pi), \delta),\label{eq:ucb}\\
\text{lcb}^{t}(\pi, \delta) &\eqdef \min\left\{\widehat{V}^{t}(\pi), 1\right\} - \beta(n^{t}(\pi), \delta),  \label{eq:lcb}\\
\text{where } \beta(m, \delta) &\eqdef \sqrt{\frac{8}{3m} \ln \frac{1}{\delta}} \text{ for } m \geq 1.\nonumber
\end{align}

In \eqref{eq:ucb} and \eqref{eq:lcb}, observe that the ``confidence width'' $\beta(\cdot, \cdot)$ for $\pi$ depends on $n^{t}(\pi)$, which, in turn is determined by the \textit{least-played} terminal state of $\pi$. When $\pi$ is played, every terminal state in $X(\pi)$ is played; hence the confidence width of $\pi$ necessarily decreases. 

\begin{theorem}[Confidence bounds on policy's value]
\label{thm:confidencebounds}
Consider any policy $\pi \in \Pi$. Consider any given $t \geq 1$ and sequence $\left(n^{t}(\sigma)\right)_{\sigma \in X(\pi)}$ with $1 \leq n^{t}(\sigma) \leq t$ for all $\sigma \in X(\pi)$. During a run of any algorithm, suppose $t$ equals the number of episodes and 
$n^{t}(\sigma)$ equals the number of plays of $\sigma$ for each $\sigma \in X(\pi)$, the terminal states, respectively. Then, for $\delta \in (0, 1)$: $$\mathbb{P}\{V(\pi) \geq \text{ucb}^{t}(\pi, \delta)\} \leq \delta;~~ \mathbb{P} \{V(\pi) \leq \text{lcb}^{t}(\pi, \delta)\} \leq \delta.$$
\end{theorem}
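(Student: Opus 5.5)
The plan is to reduce both inequalities to a single Bernstein-type tail bound for the centred estimator $\widehat{V}^{t}(\pi) - V(\pi)$, with confidence width governed by $m \eqdef n^{t}(\pi) = \min_{\sigma \in X(\pi)} n^{t}(\sigma)$. The lower bound follows from the same tail bound after one remark about the clipping in \eqref{eq:lcb}. Since $V(\pi) \in [0,1]$, whenever $\widehat{V}^{t}(\pi) > 1$ we have $\min\{\widehat{V}^{t}(\pi),1\} \le \widehat{V}^{t}(\pi)$. Hence the event $V(\pi) \le \text{lcb}^{t}(\pi,\delta)$ is contained in the event $\widehat{V}^{t}(\pi) - V(\pi) \ge \beta(m,\delta)$, and clipping only helps. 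So it suffices to show
\begin{align*}
\mathbb{P}\{\pm(\widehat{V}^{t}(\pi) - V(\pi)) \ge \beta(m,\delta)\} \le \delta .
\end{align*}

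\textbf{Step 1 (decomposition into episode-wise increments).} By Proposition~\ref{prop:v-terminal} and the definition of $\widehat{V}^{t}$,
\begin{align*}
\widehat{V}^{t}(\pi) - V(\pi) = \sum_{\sigma \in X(\pi)} \frac{\rho(\sigma)}{n^{t}(\sigma)} \sum_{k=1}^{n^{t}(\sigma)} \bigl(Z_{\sigma,k} - q(\sigma)\bigr),
\end{align*}
where $Z_{\sigma,k}$ indicates that $\sigma$ was reached on the $k$-th episode consistent with $\sigma$. As the statement does, I will treat the counts $(n^{t}(\sigma))_{\sigma}$ as given. I will then regroup the double sum by episode. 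An episode $\tau$ whose played policy is consistent with a subset $C_\tau \subseteq X(\pi)$ contributes
\begin{align*}
D_\tau = \sum_{\sigma \in C_\tau} \frac{\rho(\sigma)}{n^{t}(\sigma)}\bigl(\mathbf{1}\{\sigma \text{ reached}\} - q(\sigma)\bigr).
\end{align*}
Conditioned on the past, $D_\tau$ has mean zero: $q(\sigma)$ is exactly the probability of reaching $\sigma$ under any policy consistent with $\sigma$. So the $D_\tau$ form a martingale difference sequence.

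\textbf{Step 2 (range and variance).} Two facts drive the bound. First, at most one terminal state is reached per episode. This gives $D_\tau \le \max_\sigma \rho(\sigma)/n^{t}(\sigma) \le 1/m$ and $D_\tau \ge -\sum_{\sigma \in C_\tau} q(\sigma)/m \ge -1/m$, because $\sum_{\sigma\in X(\pi)} q(\sigma) \le 1$ (the $q$'s of states consistent with one policy form a sub-distribution). Second, the reach indicators within one episode are mutually exclusive, so their covariances are $-q(\sigma)q(\sigma') \le 0$. The conditional variance is therefore at most $\sum_{\sigma\in C_\tau} \rho(\sigma)^2 q(\sigma)/n^{t}(\sigma)^2$. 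Summing over episodes, each $\sigma$ appears in exactly $n^{t}(\sigma)$ of them, so the total predictable variance is at most
\begin{align*}
\sum_{\sigma \in X(\pi)} \frac{q(\sigma)}{n^{t}(\sigma)} \le \frac{1}{m}\sum_{\sigma \in X(\pi)} q(\sigma) \le \frac{1}{m}.
\end{align*}

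\textbf{Step 3 (Bernstein/Freedman).} I will apply Freedman's inequality with variance proxy $v = 1/m$ and increment bound $b = 1/m$:
\begin{align*}
\mathbb{P}\{\pm(\widehat{V}^{t}-V) \ge \varepsilon\} \le \exp\!\left(-\frac{\varepsilon^2}{2(v + b\varepsilon/3)}\right).
\end{align*}
For $\varepsilon \le 1$, the denominator is at most $2(1 + 1/3)/m = 8/(3m)$. Setting $\varepsilon = \beta(m,\delta)$ then gives exactly $\delta$, which explains the constant $8/3$. If $\beta(m,\delta) > 1$, the claim is trivial: in the upper direction $\widehat{V}^{t} \ge 0$ forces $\text{ucb}^{t} > 1 \ge V$, and in the lower direction $\min\{\widehat{V}^{t},1\} \le 1$ forces $\text{lcb}^{t} < 0 \le V$.

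\textbf{Main obstacle.} The hard step is justifying the martingale argument when the weights $1/n^{t}(\sigma)$ depend on the whole run, so they are not predictable, and the episodes are chosen adaptively. My first attempt is a "stack of samples" coupling. For each $\sigma$, pre-generate the outcomes of its successive consistent episodes, and use the fact that the theorem fixes the counts $n^{t}(\sigma)$, which makes the weights deterministic. One must still check that, under this coupling, the within-episode exclusivity, and hence the negative covariance, is preserved. The cleanest route I see is to generate transitions per state-action pair from i.i.d.\ tables and read the indicators off the tree. If that fails, a fallback is to union-bound over the at most $t$ possible values of $m$. This, however, would spoil the stated constant.
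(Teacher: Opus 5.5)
Your proposal is correct, and it takes a genuinely different route from the paper. The paper never forms a martingale. It writes $\widehat{V}^{t}(\pi)$ as a nonnegatively weighted sum of the outcomes $B(\sigma,i)$ and proves these are negatively cylinder dependent (Lemma~\ref{lem:B-NCD}, by conditioning on the random order in which outcomes are generated). It then uses the Panconesi--Srinivasan moment-generating-function comparison to swap them for independent twins (Lemma~\ref{lem:twinconcentration}), and only then applies the ordinary Bernstein inequality with $b = v = 1/n^{t}(\pi)$.

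You instead regroup by episode and apply Freedman's inequality. In your version the within-episode negative dependence enters only as nonpositive conditional covariances in the predictable variance. The quantitative core is the same in both: $b = 1/m$, $v \le \sum_{\sigma} q(\sigma)/n^{t}(\sigma) \le 1/m$, the constant $8/3$, and the same handling of clipping and of $\beta > 1$. Your route handles adaptive policy selection natively through the martingale structure. The paper has to absorb adaptivity into the NCD proof, which is its most delicate step. In exchange, the paper gets a structural fact about the $B$-variables and can finish with a textbook inequality for independent variables.

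The ``main obstacle'' you flag is not really one, and you need no coupling. Once the counts $(n^{t}(\sigma))_{\sigma}$ are fixed, the weights $\rho(\sigma)/n^{t}(\sigma)$ are constants. $C_\tau$ is determined by the history and the policy chosen for episode $\tau$. So $(D_\tau)$ is a martingale difference sequence in the natural filtration, with $|D_\tau| \le 1/m$ surely. Exclusivity of the reach indicators holds automatically, because they are read off the actual episode.

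Let $M$ be the sum of the $D_\tau$ over the episodes preceding $t$, and $W$ its predictable variance. On the event that the realised counts equal the given sequence, $M = \widehat{V}^{t}(\pi) - V(\pi)$ and $W \le 1/m$. So it suffices to bound $\mathbb{P}\{M \ge \varepsilon,\, W \le 1/m\}$, which Freedman's inequality does in exactly the Bernstein form you need. Alternatively, put $\sigma$ into $C_\tau$ only for its first $n^{t}(\sigma)$ consistent plays. This keeps $C_\tau$ predictable and makes $W \le 1/m$ hold surely.

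Avoid the per-$\sigma$ ``stack of samples'' coupling: independent stacks for different $\sigma$ would destroy the within-episode exclusivity your variance bound relies on. The fallback union bound over $m$ is also unnecessary.
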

The non-trivial proof of this theorem is a central contribution of our paper, and takes up the remainder of this section. The main technical challenge is the \textit{dependence} among the data used to compute $\widehat{V}^{t}(\pi)$. Applying results from the literature on concentration inequalities~\cite{doi:10.1137/panconesi93,garbe2018concentrationlipschitzfunctionsnegatively}, we establish that the particular nature  of this dependence yet permits the use of Chernoff bounds, which we then apply. Our exposition below assumes that $\pi$, $t$, and $\left(n^{t}(\sigma)\right)_{\sigma \in X(\pi)}$ are given.

\subsubsection{$\widehat{V}^{t}(\pi)$ as a weighted sum of Bernoullis}

For each $\sigma \in X(\pi)$ and $1 \leq i \leq n^{t}(\sigma)$, let the Bernoulli random variable $B(\sigma, i)$ denote the outcome of the $i$-th time that a policy consistent with $\sigma$ was played. $B(\sigma, i)$ is $1$ if this play reaches $\sigma$, otherwise it is $0$. We informally refer to these random variables as ``$B$-variables'', and denote their collection
$\mathcal{B}$. Note that $|\mathcal{B}| = \sum_{\sigma \in X(\pi)} n^{t}(\sigma)$. We observe that $\widehat{V}^{t}(\pi)$ is a weighted combination of $B$-variables, with non-negative weights.

\begin{align}
\widehat{V}^{t}(\pi) &= \sum_{\sigma \in X(\pi)} \frac{\rho(\sigma)}{n^{t}(\sigma)} \sum_{i = 1}^{n^{t}(\sigma)} B(\sigma, i).
\label{eq:Vhat}
\end{align}
Since $B(\sigma, i)$ has mean $q(\sigma)$; we observe from Proposition~\ref{prop:v-terminal} that $\mathbb{E}[\widehat{V}^{t}(\pi)]$ $= \sum_{\sigma \in X(\pi)} \rho(\sigma) \cdot q(\sigma) = V(\pi).$ For proving the theorem, we need to upper-bound the probability that $\widehat{V}^{t}(\pi)$ deviates from its expectation $V(\pi)$ by more than some amount in each direction.

\subsubsection{Negative cylinder dependence of $\mathcal{B}$} Notice that for $\sigma \in X(\pi)$ and $1 \leq i < j \leq n^{t}(\sigma)$, $B(\sigma, i)$ and $B(\sigma, j)$ are \textit{independent}, since they must  necessarily come from different episodes. However, For $\sigma, \sigma^{\prime} \in X(\pi)$ and $1 \leq i \leq j \leq n^{t}(\sigma)$, $B(\sigma, i)$ and $B(\sigma', j)$ \textit{need not} be independent, since the $i$-th play of $\sigma$ and the $j$-th play of $\sigma^{\prime}$  may be on the \textit{same} episode. At an intuitive level, it appears that this dependence between $B(\sigma, i)$ and $B(\sigma', j)$ should only help, since when one of them is $1$ (or $0$), the other has a higher probability of being $0$ (respectively $1$), thereby keeping the average more concentrated. Formally, these random variables are ``negative cylinder dependent'' (NCD).

\begin{definition}[Negative Cylinder Dependence~\cite{garbe2018concentrationlipschitzfunctionsnegatively}]
Bernoulli random variables $Z_{1}, Z_{2}, \dots, Z_{m}$, where $m \geq 2$, are negative cylinder dependent (NCD) if and only if for each $S \in \{1, 2, \dots, m\}$,
\begin{align*}
\mathbb{P}\left\{\cap_{i \in S} (Z_{i} = 1) \right\}
&\leq \prod_{i \in S} \mathbb{P}\{Z_{i} = 1\}; \text{ and}\\
\mathbb{P}\left\{\cap_{i \in S} (Z_{i} = 0) \right\}
&\leq \prod_{i \in S} \mathbb{P}\{Z_{i} = 0\}. 
\end{align*}
\end{definition}
\begin{lemma}[$B$-variables are NCD]
\label{lem:B-NCD}
The set of random variables $\mathcal{B}$ are Negative Cylinder Dependent.    
\end{lemma}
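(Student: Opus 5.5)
The plan is to prove both NCD inequalities together by revealing the $B$-variables one episode at a time, with an induction over episodes. The definition requires the inequalities for every subset $S \subseteq \mathcal{B}$, so fix an arbitrary $S$. Two structural facts drive the argument.

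First, in any single episode the played policy $\pi^k$ reaches exactly one terminal state. So the events ``$\sigma$ is reached'' are \emph{mutually exclusive} across the terminal states in $X(\pi^k)$.

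Second, if $\sigma \in X(\pi^k)$, then conditional on the entire history before episode $k$, the probability that episode $k$ reaches $\sigma$ is exactly $q(\sigma)$. This holds because $\pi^k$ takes the path actions to $\sigma$, the transitions are fresh draws from $p$, and $\pi^k$ is a function of the past only.

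Consequently, for any set $R$ of distinct terminal states in $X(\pi^k)$, the conditional probability that all of them are reached in episode $k$ is
\begin{align*}
\mathbb{P}\{\text{all } \sigma \in R \text{ reached}\} =
\begin{cases}
1 & \text{if } |R| = 0,\\
q(\sigma) & \text{if } R = \{\sigma\},\\
0 & \text{if } |R| \geq 2,
\end{cases}
\end{align*}
and in every case this is at most $\prod_{\sigma \in R} q(\sigma)$. Likewise, the conditional probability that none of them is reached equals $1 - \sum_{\sigma \in R} q(\sigma)$. Since $\sum_{\sigma \in X(\pi^k)} q(\sigma) \leq 1$, a direct expansion of the product gives $1 - \sum_{\sigma \in R} q(\sigma) \leq \prod_{\sigma \in R}(1 - q(\sigma))$. (The case $|R| \le 2$ is immediate; in general, induct on $|R|$ using $q \in [0,1]$.)

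To handle the adaptivity of the policy choices, I would use a ``reveal'' formulation. In episode $k$, the variables revealed are exactly those $B(\sigma, i)$ with $\sigma \in X(\pi) \cap X(\pi^k)$ and $i = n^k(\sigma) + 1$, so at most one index per $\sigma$ is revealed in a given episode. Let $S_k$ be the members of $S$ revealed in episode $k$. By hypothesis, every variable in $\mathcal{B}$ has been revealed by episode $t$, so the sets $S_1, \dots, S_t$ partition $S$.

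I would then prove, by backward induction on $k$, that conditional on the history $\mathcal{F}_{k-1}$,
\begin{align*}
\mathbb{P}\Bigl\{\textstyle\bigcap_{Z \in S \setminus (S_1 \cup \dots \cup S_{k-1})} (Z = 1) \Bigm| \mathcal{F}_{k-1}\Bigr\} \leq \prod_{B(\sigma,i) \in S \setminus (S_1 \cup \dots \cup S_{k-1})} q(\sigma).
\end{align*}
The inductive step conditions on $\mathcal{F}_k$ (tower property), applies the hypothesis to the variables not yet revealed, and bounds the episode-$k$ factor using the single-episode inequality above. The induction goes through because the bound on the right-hand side is a deterministic product that does not depend on which variables happen to fall into which $S_k$. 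Taking $k = 1$ yields the first NCD inequality, since $\mathbb{P}\{B(\sigma,i)=1\} = q(\sigma)$. The identical argument with ``$= 0$'' and factors $1 - q(\sigma)$ yields the second inequality.

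The main obstacle is making the reveal formulation rigorous, because the assignment of indices $(\sigma, i)$ to episodes is itself random and depends on earlier outcomes. Moreover, the theorem's hypothesis fixes the final counts $n^t(\sigma)$, which could bias the outcomes if it were treated as conditioning. I would first try to avoid conditioning on the counts by defining $B(\sigma, i)$ for every $i$ through the $i$-th consistent play (with an independent $\mathrm{Bernoulli}(q(\sigma))$ filler if fewer plays occur, or by extending the process beyond episode $t$). That makes the joint law of $\mathcal{B}$ well defined and unconditional, and the stopping-time structure is then what makes the tower-property step valid.
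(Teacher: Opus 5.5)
Your proof is correct and follows essentially the same route as the paper's proof in the appendix. Both expand the joint probability by sequential conditioning along the random order in which the $B$-variables are generated, using two facts: at most one terminal state is reached per episode, and each consistent play reaches $\sigma$ with conditional probability $q(\sigma)$ given the past. The paper conditions variable by variable through explicit ordering variables $\Lambda_i$; in the zero case its per-variable factors $1 - q(\sigma_m)/(1-\sum_{j\in d_m} q(\sigma_j))$ telescope over an episode to your $1-\sum_{\sigma\in R} q(\sigma)$. You instead group by episode with a filtration and the tower property, and your filler construction explicitly handles the adaptivity and the ``fixed counts'' subtlety, which the paper leaves implicit.
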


The intuition behind this claim is as follows. A subset of $B$-variables cannot all be $1$, when conditioned on the event that any two of them occur on the same episode. Nor can they all be $0$, when conditioned on the event that even one of the episodes generating them results in $0$ as the outcome for every terminal state in the complement of this subset. Our formal proof involves such conditioning of the LHS probabilities, removing terms from $0$-probability events, and re-aggregating the surviving terms. This working involves lengthy mathematical expansions, and for reasons of space is deferred to Appendix~\ref{app:proof-cb}. 

The standard recipe for analysing NCD variables is to introduce ``twins'', which are amenable to the application of Chernoff bounds.

\subsubsection{Twin variables} For each $B$-variable in $\mathcal{B}$, we define a corresponding ``twin-variable'' (called a ``$B_{0}$-variable'') that is also a Bernoulli with the same mean, but which is generated independently (of all the other variables). Thus, for each $\sigma \in X(\pi)$, $1 \leq i \leq n^{t}(\sigma)$, the Bernoulli random variable $B_{0}(\sigma, i)$ is generated independently, and satisfies $\mathbb{E}[B_{0}(\sigma, i)] = \mathbb{E}[B(\sigma, i)] = q(\sigma)$. We also define $\widehat{V}^{t}_{0}$ to be the same linear combination of the $B_{0}$-variables as $\widehat{V}^{t}$ is of the $B$-variables. Thus, similar to \eqref{eq:Vhat}, we get
\begin{align}
\widehat{V}^{t}_{0}(\pi) &= \sum_{\sigma \in X(\pi)} \frac{\rho(\sigma)}{n^{t}(\sigma)} \sum_{i = 1}^{n^{t}(\sigma)} B_{0}(\sigma, i).
\label{eq:Vhatnought}
\end{align}
The twin variables are convenient since they are all independent. However, before making use of their independence, we establish the key relationship between the deviation tendencies of $\widehat{V}^{t}$ (which is the estimator used by our algorithm) and $\widehat{V}^{t}_{0}$ (its hypothetical twin).
\begin{lemma}[$\widehat{V}^{t}$ concentrates no slower than $\widehat{V}^{t}_{0}$] For $\epsilon > 0$,
\begin{align*}
\mathbb{P}\{\widehat{V}^{t} \geq V(\pi) + \epsilon\} &\leq \mathbb{P}\{\widehat{V}^{t}_{0} \geq V(\pi) + \epsilon\};\\
\mathbb{P}\{\widehat{V}^{t} \leq V(\pi) - \epsilon\} &\leq \mathbb{P}\{\widehat{V}^{t}_{0} \leq V(\pi) - \epsilon\}.\\
\end{align*}
\label{lem:twinconcentration}
\end{lemma}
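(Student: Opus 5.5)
The plan is to exploit the episode structure of the $B$-variables rather than the NCD property in the abstract. The NCD property of Lemma~\ref{lem:B-NCD} will enter only in the one-episode comparison, which I expect to be the real obstacle (flagged below).

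\textbf{Episode decomposition.} Group the $B$-variables in $\mathcal{B}$ by the episode that generated them. For episode $e$, let $S_e \subseteq X(\pi)$ be the terminal states of $X(\pi)$ that were ``played'' on $e$, and define
\begin{align*}
W_e \eqdef \sum_{\sigma\in S_e} \frac{\rho(\sigma)}{n^{t}(\sigma)} B(\sigma, i_e(\sigma)),
\end{align*}
where $i_e(\sigma)$ is the index of that play. Then $\widehat{V}^{t}(\pi)=\sum_e W_e$. Two facts hold:
\begin{itemize}
\item Within one episode at most one terminal state is reached, so at most one summand of $W_e$ is nonzero, and the vector $(B(\sigma,i_e(\sigma)))_{\sigma\in S_e}$ is a single multinomial draw with marginals $q(\sigma)$.
\item Conditioning on the sequence of policies played, which fixes the $S_e$, the blocks for different $e$ are independent. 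The policy choice depends only on past episodes, and the outcome of the next episode given its policy is fresh.
\end{itemize}
The twin $\widehat{V}^{t}_0(\pi)=\sum_e W^0_e$ splits the same way. In each $W^0_e$ the summands are independent Bernoullis with the same marginals.

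\textbf{Hybrid argument.} I would replace the blocks one at a time, going from $W_e$ to $W^0_e$ in a fixed order. At each stage, condition on all other blocks, so they contribute a constant $c$. It then suffices to show a one-episode comparison: for the relevant thresholds $x = V(\pi)\pm\epsilon - c$,
\begin{align*}
\mathbb{P}\{W_e \ge x\} \le \mathbb{P}\{W^0_e \ge x\}, \qquad \mathbb{P}\{W_e \le x\} \le \mathbb{P}\{W^0_e \le x\}.
\end{align*}
Chaining over episodes and de-conditioning would give both inequalities of Lemma~\ref{lem:twinconcentration}. Because the statement fixes $t$ and the counts $n^{t}(\sigma)$, I would carry out the argument conditionally on the realised assignment of plays to episodes, and then average.

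\textbf{One-episode comparison.} Here I would use the cylinder inequalities of Lemma~\ref{lem:B-NCD}, restricted to the block $S_e$. This is the same conditioning-and-re-aggregation style the paper describes for that lemma. For an upper-tail event, write $\{W^0_e\ge x\}$ as a union over the minimal subsets $T\subseteq S_e$ whose weights sum to at least $x$. The event $\{W_e\ge x\}$ can only be realised by singletons $T=\{\sigma\}$. In the multinomial block each such singleton has probability $q(\sigma)$. In the twin block the corresponding events have probability $q(\sigma)$ times a factor for the other coordinates, together with extra multi-element sets.

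I would compare these two expressions directly. The first thing to try is a coupling: realise $W_e$ as the value of $W^0_e$ after keeping only one uniformly chosen ``success'', adjusted to preserve the marginals. The goal of that coupling would be to show that mass is moved only towards the mean $\mathbb{E}[W_e]=\mathbb{E}[W^0_e]$.

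\textbf{Main obstacle.} This one-episode step is where I expect the real difficulty. A mean-preserving contraction gives convex ordering, and convex ordering does not on its own yield pointwise tail domination at every threshold. I will therefore need to use that the deviations are measured from the common mean. If the direct coupling fails for thresholds near the mean, my fallback is to treat this step via the comparison results for NCD variables cited from~\cite{doi:10.1137/panconesi93,garbe2018concentrationlipschitzfunctionsnegatively}, applied to the weighted sum.
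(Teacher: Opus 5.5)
\textbf{There is a genuine gap: the one-episode tail comparison your hybrid argument needs is false, and so is the lemma's tail inequality as literally stated.} You are right that convex ordering does not give tail domination, and this obstacle is fatal rather than technical.

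Take a T-MDP with a single non-terminal state $s_1$, one action, and three terminal states with $q=(0.4,0.4,0.2)$ and $\rho=(1,1,0)$. After one episode ($t=2$) every count is $1$ and $V(\pi)=0.8$. Exactly one terminal state is reached, so $\widehat{V}^{t}(\pi)=1$ with probability $0.8$. By contrast, $\widehat{V}^{t}_{0}(\pi)=B_{0}(\sigma_{1},1)+B_{0}(\sigma_{2},1)\geq 1$ only with probability $1-0.6^{2}=0.64$. Hence $\mathbb{P}\{\widehat{V}^{t}\geq 0.9\}=0.8>0.64=\mathbb{P}\{\widehat{V}^{t}_{0}\geq 0.9\}$. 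Similarly, $q=(0.5,0.25,0.25)$ with $\rho=(1,0.5,0.5)$ breaks the lower tail ($0.5$ versus $0.46875$).

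In this example there is one block, $c=0$, and the threshold is measured from the common mean. So neither a coupling nor the ``deviation from the mean'' observation can rescue the step. In your hybrid the thresholds $V(\pi)\pm\epsilon-c$ are in any case arbitrary relative to the block's mean. Since the example is itself an instance of the lemma, the pointwise tail inequality cannot be proved in the form stated, and the cited NCD results will not give it either.

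What the paper's Panconesi--Srinivasan-style expansion actually proves is domination of moment generating functions: $\mathbb{E}[e^{\lambda\widehat{V}^{t}}]\leq\mathbb{E}[e^{\lambda\widehat{V}^{t}_{0}}]$ for every real $\lambda$. The argument is to expand the exponential and use $B^{k}=B$ together with the nonnegative weights. Each monomial's expectation is then a cylinder probability bounded by Lemma~\ref{lem:B-NCD}. For $\lambda<0$, run the same argument on $1-B$ using the $0$-cylinder condition. That is all the next step uses: Bernstein's inequality is proved by bounding the MGF and applying Markov's inequality, so its bound for $\widehat{V}^{t}_{0}$ transfers verbatim to $\widehat{V}^{t}$. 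Your fallback therefore works only once you change the target from tail probabilities to MGFs.

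\textbf{Your independence claim also fails.} Conditioning on the whole sequence of policies played, or on the realised assignment of plays to episodes, conditions on later policy choices. For an adaptive algorithm such as \textsc{Ucb-T}, these are functions of earlier outcomes. So given that sequence, an episode's outcome need not be a fresh multinomial with marginals $q(\sigma)$, and distinct blocks need not be independent. You may condition only on the past.

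Done that way, your block decomposition gives a more elementary proof of the MGF statement than the paper's NCD route. Let $S_{e}$ be the states $\sigma\in X(\pi)$ consistent with episode $e$'s policy whose first $n^{t}(\sigma)$ plays are not yet used up; $S_{e}$ is fixed by the past. With $x_{\sigma}=q(\sigma)\bigl(e^{\lambda\rho(\sigma)/n^{t}(\sigma)}-1\bigr)$,
\[
\mathbb{E}\bigl[e^{\lambda W_{e}}\mid\text{past}\bigr]=1+\sum_{\sigma\in S_{e}}x_{\sigma}\leq\prod_{\sigma\in S_{e}}(1+x_{\sigma}),
\]
because all $x_{\sigma}$ share the sign of $\lambda$ and lie in $[-1,\infty)$. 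Dividing by these predictable products gives a supermartingale. Each pair $(\sigma,i)$ with $i\leq n^{t}(\sigma)$ lands in exactly one block (padding with independent fictitious plays if some never occur). So the accumulated normaliser is the deterministic $\prod_{\sigma\in X(\pi)}(1+x_{\sigma})^{n^{t}(\sigma)}=\mathbb{E}[e^{\lambda\widehat{V}^{t}_{0}}]$, which yields the MGF comparison directly.
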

The proof of this lemma proceeds through an expansion of the moment generating functions of $\widehat{V}^{t}(\pi)$ and $\widehat{V}^{t}_{0}(\pi)$, in the manner demonstrated by Panconesi and Srinivasan~\cite[see Theorem 3.2]{doi:10.1137/panconesi93}.


\subsubsection{Deviation of $\widehat{V}^{t}_{0}$} Since $\widehat{V}^{t}_{0}$ is a linear combination of \textit{independent} random variables, we use a standard inequality to upper-bound the probability of its deviation from its expectation. Bernstein's Inequality~\cite[see Section 2.7]{Boucheron+LM:2016} is as follows.

\begin{lemma}[Bernstein's Inequality~\cite{Boucheron+LM:2016}]
Let $X_{1}, X_{2}, \dots, X_{m}$ be independent random variables with finite variance such that $|X_{i}| \leq b$ for some $b > 0$ almost surely for $1 \leq i \leq m$. Let $S = \sum_{i = 1}^{m} (X_{i} - \mathbb{E}[X_{i}])$ and $v = \sum_{i = 1}^{m} \mathbb{E}[(X_{i})^{2}]$. Then, for $\alpha > 0$, $$\mathbb{P}\{S \geq \alpha\} \leq \exp\left(- \frac{\alpha^{2} / 2}{v + \frac{b\alpha}{3}}\right).$$
\label{lem:bernsteinsinequality}
\end{lemma}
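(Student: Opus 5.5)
The plan is the Cramér--Chernoff method, with each summand's log moment generating function controlled through its second moment $\mathbb{E}[X_i^2]$ rather than its variance. This is the route of Boucheron, Lugosi and Massart, and it is why $v$ is defined via second moments. Only the one-sided bound $X_i \le b$ will be needed, not $|X_i| \le b$.

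\emph{Step 1 (Chernoff reduction).} For $\lambda \in (0, 3/b)$, Markov's inequality applied to $e^{\lambda S}$, together with independence of the $X_i$, gives
\begin{align*}
\mathbb{P}\{S \geq \alpha\} \leq \exp\Big(-\lambda\alpha + \sum_{i=1}^{m} \ln \mathbb{E}\big[e^{\lambda (X_i - \mathbb{E}[X_i])}\big]\Big).
\end{align*}

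\emph{Step 2 (per-summand bound).} Using $\ln x \le x - 1$,
\begin{align*}
\ln \mathbb{E}\big[e^{\lambda (X_i - \mathbb{E}[X_i])}\big] = \ln \mathbb{E}\big[e^{\lambda X_i}\big] - \lambda\mathbb{E}[X_i] \le \mathbb{E}\big[e^{\lambda X_i} - 1 - \lambda X_i\big].
\end{align*}
Next use that $u \mapsto (e^u - 1 - u)/u^2$, extended by $1/2$ at $u = 0$, is nondecreasing on all of $\mathbb{R}$. Since $\lambda X_i \le \lambda b$ (this holds also when $X_i$ is negative), we get
\begin{align*}
e^{\lambda X_i} - 1 - \lambda X_i \le X_i^2 \, \frac{e^{\lambda b} - 1 - \lambda b}{b^2}.
\end{align*}
Summing over $i$ then yields $\ln \mathbb{E}[e^{\lambda S}] \le v\,\psi(\lambda b)/b^2$, where $\psi(u) \eqdef e^u - 1 - u$.

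\emph{Step 3 (bounding $\psi$ and optimising).} Expand $\psi(u) = \sum_{k \ge 2} u^k/k!$ and use $k! \ge 2\cdot 3^{k-2}$. For $0 \le u < 3$ this gives
\begin{align*}
\psi(u) \le \frac{u^2}{2(1 - u/3)}.
\end{align*}
Hence
\begin{align*}
\mathbb{P}\{S \ge \alpha\} \le \exp\Big(-\lambda\alpha + \frac{\lambda^2 v}{2(1 - b\lambda/3)}\Big).
\end{align*}
Choose $\lambda = \alpha/(v + b\alpha/3)$, which indeed lies in $(0, 3/b)$. Then $1 - b\lambda/3 = v/(v + b\alpha/3)$, so the exponent equals
\begin{align*}
-\frac{\alpha^2}{v + b\alpha/3} + \frac{\alpha^2}{2(v + b\alpha/3)} = -\frac{\alpha^2/2}{v + b\alpha/3},
\end{align*}
which is the claim.

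The main obstacle is Step 2, specifically the monotonicity of $(e^u - 1 - u)/u^2$ over all real $u$ including negative values. This is what allows a one-sided bound with $v$ built from raw second moments instead of variances with constant $2b$. I would prove it by writing
\begin{align*}
\frac{e^u - 1 - u}{u^2} = \int_0^1 (1 - s)\, e^{su}\, ds,
\end{align*}
which is visibly nondecreasing in $u$ because each $e^{su}$ with $s \ge 0$ is. Everything else is routine algebra.
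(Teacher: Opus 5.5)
Your proof is correct and is the standard Cram\'er--Chernoff argument: you bound the log moment generating function by $\lambda^{2}v/(2(1-b\lambda/3))$ and then optimise over $\lambda$, which is how the Boucheron--Lugosi--Massart reference proves it (the paper cites that reference instead of giving its own proof). The one slip is the degenerate case $v=0$, where your choice gives $\lambda=3/b\notin(0,3/b)$ and $1-b\lambda/3=0$; handle it separately by noting that then each $X_i=0$ almost surely, so $S=0$ and $\mathbb{P}\{S\geq\alpha\}=0$.
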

Suitable application of this lemma, detailed below, yields the following result on the deviation of $\widehat{V}^{t}_{0}$.
\begin{lemma}[Deviation of $\widehat{V}^{t}_{0}$] For $\epsilon \in (0, 1]$,
\begin{align*}
\mathbb{P}\{\widehat{V}^{t}_{0} \geq V(\pi) + \epsilon\} &\leq \exp\left(- \frac{3}{8} n^{t}(\pi) \epsilon^{2}\right);\\
\mathbb{P}\{\widehat{V}^{t}_{0} \leq V(\pi) - \epsilon\} &\leq \exp\left(- \frac{3}{8} n^{t}(\pi) \epsilon^{2}\right).
\end{align*}
\label{lem:Vnoughtconcentration}
\end{lemma}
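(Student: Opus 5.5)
We prove the upper-tail bound by applying Lemma~\ref{lem:bernsteinsinequality} to the independent summands of \eqref{eq:Vhatnought}. The lower tail then follows by the same argument applied to the negated summands, which satisfy the same bound $b$ and have the same second moments. Abbreviate $n(\sigma) = n^{t}(\sigma)$ and $n = n^{t}(\pi) = \min_{\sigma \in X(\pi)} n(\sigma)$. The summands are $X_{\sigma,i} \eqdef \frac{\rho(\sigma)}{n(\sigma)} B_{0}(\sigma,i)$ for $\sigma \in X(\pi)$ and $1 \leq i \leq n(\sigma)$. They are mutually independent by construction of the twins, and $\sum_{\sigma,i} \mathbb{E}[X_{\sigma,i}] = V(\pi)$ by Proposition~\ref{prop:v-terminal}. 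So $S = \widehat{V}^{t}_{0}(\pi) - V(\pi)$, and we take $\alpha = \epsilon$.

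Next we identify $b$ and $v$. Since $\rho(\sigma) \in [0,1]$ and $n(\sigma) \geq n$, each $|X_{\sigma,i}| \leq 1/n(\sigma) \leq 1/n$, so we take $b = 1/n$. For the variance proxy,
\begin{align*}
v = \sum_{\sigma \in X(\pi)} \sum_{i=1}^{n(\sigma)} \frac{\rho(\sigma)^{2}}{n(\sigma)^{2}} q(\sigma) = \sum_{\sigma \in X(\pi)} \frac{\rho(\sigma)^{2} q(\sigma)}{n(\sigma)} \leq \frac{1}{n} \sum_{\sigma \in X(\pi)} q(\sigma) \leq \frac{1}{n}.
\end{align*}
The last step is the key one. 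The terminal states in $X(\pi)$ are exactly the terminal states reachable under $\pi$, and they form mutually exclusive outcomes of one episode played with $\pi$. Hence $\sum_{\sigma \in X(\pi)} q(\sigma) = 1$, which is Proposition~\ref{prop:v-terminal} with all returns set to $1$. This is what prevents the bound from degrading with $|X(\pi)|$, even though $\widehat{V}^{t}_{0}$ itself can range up to $|X(\pi)|$.

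Plugging in and using $\epsilon \leq 1$:
\begin{align*}
\mathbb{P}\{S \geq \epsilon\} \leq \exp\left(-\frac{\epsilon^{2}/2}{\frac{1}{n} + \frac{\epsilon}{3n}}\right) \leq \exp\left(-\frac{n\epsilon^{2}/2}{4/3}\right) = \exp\left(-\frac{3}{8} n \epsilon^{2}\right),
\end{align*}
which is the claimed bound. For the lower tail, apply the lemma to $-X_{\sigma,i}$. These are still independent, still bounded by $1/n$ in absolute value, and have the same second moments. This gives the identical bound for $\mathbb{P}\{\widehat{V}^{t}_{0} \leq V(\pi) - \epsilon\}$.

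The main obstacle is the variance step. One must use $v$ as defined in Lemma~\ref{lem:bernsteinsinequality}, which is uncentred second moments, and bound it via $B_{0}^{2} = B_{0}$ and $\sum_{\sigma \in X(\pi)} q(\sigma) = 1$. A cruder range-based (Hoeffding-style) bound would introduce a factor of $|X(\pi)|$, and that would spoil the result.
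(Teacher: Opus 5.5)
Your proof is correct and follows the paper's argument: apply Bernstein's inequality with $b = 1/n^{t}(\pi)$ and $v \leq 1/n^{t}(\pi)$ (using $\sum_{\sigma \in X(\pi)} q(\sigma) = 1$), use $\epsilon \leq 1$ to get the constant $\frac{3}{8}$, and obtain the lower tail by negating the summands. The only difference is cosmetic: the paper applies the lemma to the centred variables $\frac{\rho(\sigma)}{n^{t}(\sigma)}\left(B_{0}(\sigma,i) - q(\sigma)\right)$, so its $v$ is a sum of variances, whereas you work with the uncentred summands and their uncentred second moments, which the lemma as stated equally allows.
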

\begin{proof}
We set up $S$, $v$, and $b$ for the application of  Lemma~\ref{lem:bernsteinsinequality}.
For $\sigma \in X(\pi)$, $1 \leq i \leq n^{t}(\sigma)$, define $$B_{1}(\sigma, i) \eqdef \frac{\rho(\sigma)}{n^{t}(\sigma)} \left(B_{0}(\sigma, i) - q(\sigma)\right).$$ There are $|\mathcal{B}|$ such bounded and mutually-independent ``$B_{1}$'' variables, where each is a linear combination of a corresponding $B_{0}$ variable. Since $|\rho(\sigma)| \leq 1$ and $|B_{0}(\sigma, i) - q(\sigma)| \leq 1$, it follows that $|B_{1}(\sigma, i)| \leq \frac{1}{n^{t}(\sigma)} \leq \frac{1}{n^{t}(\pi)} \eqdef b$. Also, we observe from \eqref{eq:Vhatnought} that the sum of the $B_{1}$-variables is $S \eqdef \widehat{V}^{t} - V(\pi)$. Finally, we have 
\begin{align*}
v &\eqdef \sum_{\sigma \in X(\pi)} \sum_{1 = 1}^{n^{t}(\pi)} \mathbb{E}\left[\left(B_{1}(\sigma, i)\right)^{2}\right] = \sum_{\sigma \in X(\pi)} \frac{q(\sigma) (1 - q(\sigma))}{n^{t}(\sigma)} \\
&\leq     
\sum_{\sigma \in X(\pi)} \frac{q(\sigma)}{n^{t}(\sigma)}
\leq     
\sum_{\sigma \in X(\pi)} \frac{q(\sigma)}{n^{t}(\pi)}
= \frac{1}{n^{t}(\pi)}.
\end{align*}
Invoking Lemma~\ref{lem:bernsteinsinequality} with these  values or bounds for $S$, $v$, and $b$, we observe that $$\mathbb{P}\{\widehat{V}^{t}_{0} \geq V(\pi) + \epsilon\} \leq \exp\left( - \frac{\epsilon^{2}/2}{\frac{1}{n^{t}(\pi)} + \frac{\epsilon}{3 n^{t}(\pi)}}\right) \leq \exp\left(- \frac{3}{8} n^{t}(\pi) \epsilon^{2}\right),$$ where we have used the fact that $\epsilon < 1$. Repeating the proof with the negations of the $B_{1}$-variables yields the same upper bound for $\mathbb{P}\{\widehat{V}^{t}_{0} \leq V(\pi) - \epsilon\}$.
\end{proof}



\subsubsection{Final step.} Our proof thus far has established the legitimacy of substituting the (possibly dependent) $B$-variables used in our algorithm with their independent, twin $B_{0}$-variables for the purpose of upper-bounding the deviation of $\widehat{V}^{t}$. We have applied Bernstein's inequality to the corresponding twin $\widehat{V}^{t}_{0}$, but notice that Lemma~\ref{lem:Vnoughtconcentration} only holds for deviations $\epsilon \in (0, 1]$. We show below that this constraint does not disrupt the claim of Theorem~\ref{thm:confidencebounds}.

The first part of the theorem considers the event $E_{1} \equiv V(\pi) \geq \text{ucb}^{t}(\pi, \delta)$, which is equivalently $\widehat{V}^{t}(\pi) \leq V(\pi) - \beta(n^{t}(\pi), \delta)$. Since $V(\pi)$ is at most $1$, and $\widehat{V}^{t}(\pi)$ is non-negative, $E_{1}$ is logically impossible if $\beta(n^{t}(\pi), \delta) > 1$. The second part of the theorem considers the event $E_{2} \equiv V(\pi) \leq \text{lcb}^{t}(\pi, \delta)$, which is equivalently $\min\{\widehat{V}^{t}(\pi), 1\} \geq V(\pi) + \beta(n^{t}(\pi), \delta)$. Notice our explicit inclusion of the ``min'' operator in the lower confidence bound. Once again, $E_{2}$ cannot possibly occur if $\beta(n^{t}(\pi), \delta) > 1$. In summary, then, it suffices to show (1) $\mathbb{P}\{ \widehat{V}^{t}(\pi) \leq V(\pi) - \beta(n^{t}(\pi), \delta) \} \leq \delta$ and (2) $\mathbb{P}\{ \widehat{V}^{t}(\pi) \geq V(\pi) + \beta(n^{t}(\pi), \delta) \} \leq \delta$, both under the condition that $\beta(n^{t}(\pi), \delta) \leq 1$. This is the precise statement of Lemma~\ref{lem:Vnoughtconcentration}, when applied with $\epsilon = \beta(n^{t}(\pi), \delta)$.


\section{Algorithms and Analysis}
\label{sec:algorithmsandnalysis}

Having established confidence bounds for policies, we apply them in algorithms for the PAC and regret-minimisation settings. Well-known algorithms \textsc{Lucb}~\cite{DBLP:conf/icml/KalyanakrishnanTAS12} and \textsc{Ucb} \cite{DBLP:journals/ml/AuerCF02} are suffixed with ``-T'' to denote their application on T-MDPs.

\subsection{\textsc{Lucb-T} for PAC Setting}
\label{subsec:algorithms-lucb-t}

\textsc{Lucb-T}, specified as Algorithm~\ref{alg:lucb-t}, is identical to the original \textsc{Lucb} algorithm~\cite{DBLP:conf/icml/KalyanakrishnanTAS12}, with each policy akin to a bandit arm. After initialising counts, for each batch $t$, \textit{two} policies are identified: one with the highest value estimate, and one with the highest upper confidence bound (UCB) among the other policies. Confidence bounds are computed for mistake probability $\delta_{L}(t) \eqdef \frac{\delta}{3|\Pi|t^{|\Sigma| + 4}}$ for each $t \geq 1$. If the two identified policies are already separated to within $\epsilon$ (line 8), the first is returned; otherwise both policies are played. By this convention, there are at most $2(t - 1)$ episodes up to batch $t$. 
\begin{algorithm}
\caption{\textsc{Lucb-T}}
\label{alg:lucb-t} 
\begin{algorithmic}[1]
    \STATE Initialise $n^{t}(\sigma), n^{t}_{+}(\sigma)$ to $0$ for $\sigma \in \Sigma$.
    \FOR{$t = 1, 2, \ldots$}
        \IF{there exists $\sigma \in \Sigma$ such that $n^{t}_{\sigma} = 0$}
            \STATE Play an arbitrary policy $\pi \in Y(\sigma)$.
        \ELSE
            \STATE $\pi^{t}_{1} \gets \argmax_{\pi \in \Pi} \widehat{V}^{t}_{\pi}$.
            \STATE $\pi^{t}_{2} \gets \argmax_{\pi \in \Pi \setminus \{\pi_{1}\}} \text{ucb}^{t}_{\pi}$.
            \IF{$\text{lcb}^{t}(\pi^{t}_{1}, \delta_{\text{L}}(t)) \geq \text{ucb}^{t}(\pi^{t}_{2}, \delta_{\text{L}}(t)) - \epsilon$}
                \RETURN $\pi^{t}_{1}$.
            \ENDIF
            \STATE Play policies $\pi^{t}_{1}$ and $\pi^{t}_{2}$.
        \ENDIF
        \STATE Set $n^{t}(\cdot)$, $n^{t}_{+}(\cdot)$ based on policies played, outcome.
    \ENDFOR
\end{algorithmic}
\end{algorithm}

\subsubsection{Efficient Implementation} For any given policy $\pi$, it is clear that $X(\pi)$, $V^{t}(\pi)$, and $\text{ucb}^{t}(\pi)$ can be computed using $\Theta(|\mathcal{S}||\mathcal{A}| + |\Sigma|)$ operations. However, notice that lines 6 and 7 in Algorithm~\ref{alg:lucb-t} require identifying policies that maximise the value estimate or the upper confidence bound (UCB). We illustrate below that although $|\Pi| = |\mathcal{A}|^{|\mathcal{S}|}$, policies $\pi^{t}_{1}$ and $\pi^{t}_{2}$ can be computed using $\text{poly}(|\mathcal{S}|, |\mathcal{A}|)$ steps.

Counts for the visits of each $(s, a, s^{\prime}) \in \mathcal{S} \times \mathcal{A} \times (\mathcal{S} \cup \Sigma)$ are updated after each episode, and the empirical transition probabilities $\widehat{p}(s, a, s^{\prime})$ are obtained by normalising. If $\widehat{p}$ is used in the RHS of \eqref{eqn:bellman}, then the optimising policy $\pi^{t}_{1}$ and its value $\widehat{V}^{t}(\pi^{t}_{1})$ are obtained bottom-up by setting as action for each state any one that maximises the corresponding RHS in \eqref{eqn:bellman}.

It gets more involved to compute $\pi^{t}_{2}$. 
In Appendix~\ref{app:pu-procedure}, we specify a procedure PU to compute a policy with the highest UCB; denote this policy $\pi^{t}_{U} \eqdef \argmax_{\pi \in \Pi} \text{ucb}^{t}(\pi, \delta_{U}(t))$. In practice, $\pi^{t}_{U}$ will usually be different from $\pi^{t}_{1}$, in which case it will itself be $\pi^{t}_{2}$. 
PU incurs $O(|\mathcal{S}| |\mathcal{A}| |k_{\max}|+ |\Sigma|)$ operations, where $k_{\max}$ is the maximum branching factor (upper-bounded by $|\mathcal{S}|$; in practice much smaller). 
Now, if it happens that $\pi^{t}_{U} = \pi^{t}_{1}$, then another bottom-up pass is performed to recursively compute a maximum-UCB policy that is different from $\pi^{t}_{1}$. 
The basis of the recursion is that the policy following a state-action pair must be identical to $\pi^{t}_{1}$ at all but one child, and different from $\pi^{t}_{1}$ for exactly one child. 
This process incurs essentially the same computation as PU. 
Our submission includes full code for all our algorithms (Appendix~\ref{app:code-hyperparameter}), along with the test environments described in Section~\ref{sec:experiments}.

\subsubsection{Correctness.}
\label{subsubsec:algorithms-lucb-t-correctness}

The correctness of \textsc{Lucb-T} follows from a union bound over mistake probabilities.
\begin{proposition}
\label{prop:lucb-t-correctness}
The probability that \textsc{Lucb-T} returns a policy $\pi \in \Pi$ such that $V(\pi) < V(\pi^{\star}) - \epsilon$ is at most $\delta$.    
\end{proposition}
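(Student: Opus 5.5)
We define a good event on which every confidence bound computed by \textsc{Lucb-T} is valid. We then show that on this event the returned policy is $\epsilon$-optimal. Concretely, let $E$ be the event that for every batch $t \geq 1$ and every policy $\pi \in \Pi$ we have $\text{lcb}^{t}(\pi, \delta_{L}(t)) < V(\pi) < \text{ucb}^{t}(\pi, \delta_{L}(t))$. The proof has two steps: bound $\mathbb{P}(E^{c}) \leq \delta$, and show that on $E$ the stopping rule in line 8 can only fire when $V(\pi^{t}_{1}) \geq V(\pi^{\star}) - \epsilon$.

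For the deterministic step, suppose $E$ holds and the algorithm returns $\pi^{t}_{1}$ at batch $t$. If $\pi^{\star} = \pi^{t}_{1}$ there is nothing to show. Otherwise $\pi^{\star} \in \Pi \setminus \{\pi^{t}_{1}\}$, so by the choice of $\pi^{t}_{2}$ in line 7 we have $\text{ucb}^{t}(\pi^{t}_{2}) \geq \text{ucb}^{t}(\pi^{\star}) > V(\pi^{\star})$. Combining this with line 8 gives
\[
V(\pi^{t}_{1}) > \text{lcb}^{t}(\pi^{t}_{1}) \geq \text{ucb}^{t}(\pi^{t}_{2}) - \epsilon > V(\pi^{\star}) - \epsilon .
\]
(Line 8 is only reached once every $n^{t}(\sigma) \geq 1$, so all widths $\beta$ are defined.)

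The probabilistic step is where the care lies. Theorem~\ref{thm:confidencebounds} is stated for a \emph{given} $t$ and a \emph{given} count vector $(n^{t}(\sigma))_{\sigma \in X(\pi)}$, whereas in a run the counts are random and depend on the algorithm's adaptive choices. The factor $t^{|\Sigma|+4}$ in $\delta_{L}(t)$ is designed to handle exactly this. For fixed $\pi$ and $t$, we take a union bound over all possible count vectors, each entry lying in $\{1, \dots, 2t\}$ (up to batch $t$ there are at most $2(t-1)$ episodes). There are at most $(2t)^{|X(\pi)|} \leq (2t)^{|\Sigma|}$ such vectors. For each fixed vector, Theorem~\ref{thm:confidencebounds} bounds the probability of each one-sided failure by $\delta_{L}(t)$, giving a total of $2\delta_{L}(t)$ for both sides. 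Summing over the vectors, over the $|\Pi|$ policies, and over $t \geq 1$ gives
\[
\mathbb{P}(E^{c}) \leq \sum_{t \geq 1} |\Pi| \, (2t)^{|\Sigma|} \cdot \frac{2\delta}{3|\Pi| t^{|\Sigma|+4}} .
\]
The terms $t^{|\Sigma|}$ cancel, leaving a convergent series of the form $\frac{2\delta}{3} \sum_{t} t^{-4}$ times the constant coming from $2^{|\Sigma|}$.

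The obstacle is precisely this constant. Either one counts episodes more tightly so that the bound is $n^{t}(\sigma) \leq t$, matching the theorem's hypothesis $n^{t}(\sigma) \leq t$ and the exponent chosen in $\delta_{L}$, or one accepts a looser bound. I would try the tight route. Index the episode count by the actual number of episodes played, rather than by the batch number, when applying the theorem; then each count lies in $\{1, \dots, t\}$ and there are at most $t^{|\Sigma|}$ vectors. The sum then becomes $\frac{2\delta}{3}\sum_{t \geq 1} t^{-4} = \frac{2\delta}{3}\cdot\frac{\pi^{2}\cdot\pi^{2}}{90} < \delta$, since $\zeta(4) \approx 1.08$.

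The remaining point is to justify applying Theorem~\ref{thm:confidencebounds} on the event that the random counts equal a particular vector. The theorem's phrasing, ``during a run of any algorithm, suppose $t$ equals the number of episodes and $n^{t}(\sigma)$ equals the number of plays'', is exactly what licenses this. So the union bound over vectors, combined with the deterministic step, completes the argument.
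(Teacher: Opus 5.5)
Your overall route matches the paper's: a good event on which all intervals hold, plus a union bound over batches, count vectors and policies via Theorem~\ref{thm:confidencebounds}. Your deterministic line-8 argument, which the paper leaves implicit, is correct.

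The gap is the ``tight route''. \textsc{Lucb-T} computes its widths at batch $t$ as $\beta(\cdot, \delta_{L}(t))$, with $t$ the \emph{batch} index. For a fixed count vector, Theorem~\ref{thm:confidencebounds} bounds the failure probability by whatever parameter is actually plugged into $\beta$, i.e.\ by $\delta_{L}(t)$, regardless of how you label time. Suppose you relabel by the number $\tau$ of episodes played, so that counts lie in $\{1, \dots, \tau\}$. Then you only know $t \geq \tau/2 + 1$, so each vector still costs $2\delta_{L}(t) \leq \frac{2\delta}{3|\Pi|(\tau/2+1)^{|\Sigma|+4}}$. Multiplying by the $\tau^{|\Sigma|}$ vectors brings the factor $2^{|\Sigma|}$ straight back. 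Your sum $\frac{2\delta}{3}\sum_{t} t^{-4}$ silently evaluates $\delta_{L}$ at the episode count, which is a smaller parameter than the one the algorithm uses. As written, your argument only shows that the error probability is at most $\frac{2}{3}\zeta(4)\,2^{|\Sigma|}\delta$.

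Be aware that the paper's own proof does not resolve this either. It asserts that ``on batch $t$, each play count must be between $1$ and $t$'' and sums $\sum_{t} t^{|\Sigma|}|\Pi|\,2\delta_{L}(t)$. Your objection to that count bound is valid. $\pi^{t}_{1}$ and $\pi^{t}_{2}$ are distinct but can share terminal states, and typically do, since PU obtains $\pi^{t}_{U}$ by changing $\pi^{t}_{1}$ only along one path. So a count can grow by $2$ per batch and reach $2(t-1)$.

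The union bound does deliver $\delta$ under a slightly modified schedule. One option is $\delta_{L}(t) = \frac{\delta}{3|\Pi|(2t)^{|\Sigma|}t^{4}}$. The other is to index $\delta_{L}$ by the number of episodes played; then your tight route is valid verbatim, because each episode count corresponds to at most one batch. Both changes perturb $\ln(1/\delta_{L})$ only by $O(|\Sigma|)$. But with $\delta_{L}$ exactly as specified, neither your argument nor the paper's establishes the bound $\delta$.
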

\begin{proof}
A non-$\epsilon$-optimal policy can be returned only on the \textit{bad} event that (1) on some batch $t \geq 1$, (2) for some sequence of play counts $(n^{t}(\sigma))_{\sigma \in \Sigma}$, and (3) for some policy $\pi \in \Pi$, the interval $[\text{lcb}^{t}(\pi, \delta_{L}(t)), \text{ucb}^{t}(\pi, \delta_{L}(t))]$ does \textit{not} contain $V(\pi)$. On batch $t$, each play count must be between $1$ and $t$. Applying Theorem~\ref{thm:confidencebounds}, the probability of the bad event is at most
$\sum_{t = 1}^{\infty} t^{|\Sigma|} |\Pi|  2 \delta_{L}(t) \leq \delta.$ 
\qedhere
\end{proof}

\subsubsection{Sample Complexity.}
\label{subsubsec:algorithms-lucb-t-samplecomplexity}

Sample-complexity analysis proceeds in the same manner as of \textsc{Lucb}~\cite{DBLP:conf/icml/KalyanakrishnanTAS12}, beginning with the definition of instance-specific ``gaps''. Define 
\begin{align*}
V_{2} &\eqdef \max_{\pi \in \Pi_{\text{opt}} \setminus \{\pi^{\star}\}} V(\pi),\\
\Delta_{\pi} &\eqdef \begin{cases} V(\pi^{\star}) - V_{2} & \text{if } \pi = \pi^{\star},\\
V(\pi^{\star}) - V(\pi) & \text{if } \pi \neq \pi^{\star},\end{cases} \text{ and}\\
\Delta^{\epsilon}_{\pi} &\eqdef \max\{\Delta_{\pi}, \epsilon\} \text{ for } \epsilon > 0.
\end{align*}
For analysing \textsc{Lucb-T}, we additionally define for $\sigma \in \Sigma$, $$
\Delta^{\epsilon}_{\sigma} \eqdef \min_{\pi \in Y(\sigma)} \Delta^{\epsilon}_{\pi}.$$
We upper-bound the sample complexity of \textsc{Lucb-T} as follows.
\begin{theorem}
\label{thm:lucb-t-sc}
For $\epsilon, \delta \in (0, 1)$, the expected number of episodes taken by \textsc{Lucb-T} before termination is
$$\mathcal{O}\left( \sum_{\sigma \in \Sigma} \frac{|\Sigma|}{(\Delta^{\epsilon}_{\sigma})^{2}} \left( \log \frac{1}{\delta} + \log \sum_{\sigma \in \Sigma} \frac{1}{(\Delta^{\epsilon}_{\sigma})^{2}} + |\mathcal{S}| \log |\mathcal{A}|\right)\right).$$
\end{theorem}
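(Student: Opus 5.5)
We follow the \textsc{Lucb} template of Kalyanakrishnan et al.\ and split the run into a ``good'' event and its complement. For batch $t$, let $\mathrm{CROSS}^{t}$ be the event that some policy $\pi \in \Pi$ has $V(\pi) \notin [\text{lcb}^{t}(\pi, \delta_{L}(t)), \text{ucb}^{t}(\pi, \delta_{L}(t))]$. We will take a union bound over policies and over the at most $t^{|\Sigma|}$ possible count vectors, using Theorem~\ref{thm:confidencebounds}. This gives $\mathbb{P}\{\mathrm{CROSS}^{t}\} \le 2 t^{|\Sigma|}|\Pi|\delta_{L}(t) = \frac{2\delta}{3t^{4}}$. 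Summing $t \cdot \mathbb{P}\{\mathrm{CROSS}^{t}\}$ over $t$ therefore contributes only $O(1)$ to the expected number of episodes. It thus suffices to show that there is a deterministic $T^{\star}$ such that, for every batch $t > T^{\star}$ on which $\mathrm{CROSS}^{t}$ fails, the algorithm terminates. The expectation is then at most $2T^{\star} + O(1)$, plus the $|\Sigma|$ initialisation episodes, which are absorbed in the bound.

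For the key per-batch step, fix $t$ with $\neg\mathrm{CROSS}^{t}$, and adapt the ``needy arm'' lemma of \textsc{Lucb}. Let $c = V(\pi^\star) - \Delta/2$ be a threshold separating $\pi^\star$ from the rest, handled with $\epsilon$ as in the original. Call a policy $\pi$ \emph{needy} if $\beta(n^{t}(\pi),\delta_{L}(t)) > \Delta^{\epsilon}_{\pi}/4$, roughly. We then replay the original case analysis verbatim, treating policies as arms. If the algorithm does not stop at batch $t$, then at least one of $\pi^{t}_{1}, \pi^{t}_{2}$ is needy. That case analysis uses only the validity of the confidence intervals and the widths as functions of $n^{t}(\pi)$, so it transfers unchanged, with $\epsilon$ handled via $\Delta^{\epsilon}_{\pi}$.

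Now convert ``a needy policy is played'' into progress on a terminal state. If $\pi$ is needy, let $\sigma = \argmin_{\sigma' \in X(\pi)} n^{t}(\sigma')$, so that $n^{t}(\sigma) = n^{t}(\pi)$. Since $\pi \in Y(\sigma)$, we have $\Delta^{\epsilon}_{\sigma} \le \Delta^{\epsilon}_{\pi}$. Hence $n^{t}(\sigma) < u_{\sigma}(t) \eqdef \frac{128}{3(\Delta^{\epsilon}_{\sigma})^{2}}\ln\frac{1}{\delta_{L}(t)}$, and playing $\pi$ increments $n(\sigma)$. So every non-terminating good batch increments the count of some terminal state $\sigma$ whose count is still below $u_{\sigma}(t)$. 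Since counts never decrease and $u_\sigma(t)$ is increasing in $t$, the number of such batches up to $T$ is at most $\sum_{\sigma}u_{\sigma}(T)$.

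This gives the requirement that $T^{\star}$ satisfy $T^{\star} \ge \sum_{\sigma} \frac{c}{(\Delta^{\epsilon}_{\sigma})^{2}}\big(\ln\frac{3|\Pi|}{\delta} + (|\Sigma|+4)\ln T^{\star}\big)$. Here $\ln|\Pi| = |\mathcal{S}|\ln|\mathcal{A}|$. The standard fact that $T \ge a\ln T + b$ holds once $T = O(a \ln a + b)$, applied with $a = (|\Sigma|+4)H$ and $H = \sum_\sigma (\Delta^\epsilon_\sigma)^{-2}$, yields $T^{\star} = O\big(H|\Sigma|(\log\frac1\delta + \log H + |\mathcal{S}|\log|\mathcal{A}|)\big)$ after absorbing $\log|\Sigma|$. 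This matches the theorem statement; the $|\Sigma|$ factor there comes precisely from the $t^{|\Sigma|}$ union bound. The main obstacle is the per-batch needy lemma: the original proof is stated with a per-arm threshold, and we must check that it survives with widths indexed by $n^{t}(\pi)$, which can grow without $\pi$ being played. Monotonicity of counts keeps that harmless, and the charging is done to terminal states rather than policies.
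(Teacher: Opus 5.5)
Your skeleton matches the paper's: the \textsc{Lucb} template with needy policies, needy terminal states defined as least-played states of needy policies, every good non-terminating batch charged to such a state, and the $t^{|\Sigma|}$ union bound as the source of the $|\Sigma|$ factor. However, two steps fail as written.

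The first gap is the reduction to a deterministic $T^{\star}$ such that every batch $t > T^{\star}$ without $\mathrm{CROSS}^{t}$ terminates. Your counting does not prove this, and it is false. The counting only bounds the \emph{number} of good non-terminating batches up to $T$, by $\sum_{\sigma}\lceil u_{\sigma}(T)\rceil$. Batches on which $\mathrm{CROSS}^{t}$ occurs choose $\pi^{t}_{1},\pi^{t}_{2}$ from corrupted estimates and need not increment any needy terminal state. After enough of them, the run can be past $T^{\star}$ with some needy $\sigma$ still under-sampled, and a later good batch can then fail to stop, consistently with your needy lemma. So you must account for bad batches. The paper, following \textsc{Lucb}, bounds the tail instead. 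Fix $T \ge t^{\star}$ and suppose no $\mathrm{CROSS}^{t}$ occurs for $t \in \{\lceil T/2\rceil,\dots,T\}$. Then every non-initialisation, non-terminating batch in that window is charged to a needy state. The window has more than $|\Sigma| + \sum_{\sigma}\lceil u_{\sigma}(T)\rceil$ batches (essentially the inequality you solve, up to a factor $2$), so the algorithm stops by $T$. Hence $\mathbb{P}\{\text{not stopped by } T\} \le \sum_{t \ge T/2}\mathbb{P}\{\mathrm{CROSS}^{t}\} = O(\delta/T^{3})$, and summing tails gives the expectation. Alternatively, use $\tau \le 1 + |\Sigma| + N_{\mathrm{bad}} + \sum_{\sigma}\lceil u_{\sigma}(\tau)\rceil$ with $\mathbb{E}[N_{\mathrm{bad}}] \le \sum_{t}\mathbb{P}\{\mathrm{CROSS}^{t}\} = O(\delta)$, and solve this inequality for the stopping batch $\tau$. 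Either repair gives the same order.

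The second gap is the claim that the needy lemma transfers ``unchanged'' from validity of the intervals alone. It does not, because $\text{lcb}^{t}$ is clipped at $1$ while $\text{ucb}^{t}$ is not, and $\widehat{V}^{t}(\pi)$ can exceed $1$. The paper's sketch asserts the same transfer without comment. Containment of $V(\pi)$ only gives $\min\{\widehat{V}^{t}(\pi),1\} < V(\pi)+\beta$. This says nothing about $\widehat{V}^{t}(\pi)$ once $V(\pi) > 1-\beta$, so the bound $\text{ucb}^{t}(\pi) < V(\pi)+2\beta$ that the \textsc{Lucb} case analysis relies on is lost. For instance, let every $\rho(\sigma)=1$, so $V\equiv 1$ and every $\Delta^{\epsilon}_{\pi}=\epsilon$. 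Consider a realisation with all widths at most $\epsilon/8$, every $\widehat{V}^{t}(\pi) > 1-\beta(n^{t}(\pi),\delta_{L}(t))$, and some policy other than $\pi^{t}_{1}$ with $\widehat{V}^{t} \ge 1+2\epsilon$. It has no $\mathrm{CROSS}^{t}$ in your sense and no needy policy, yet $\text{lcb}^{t}(\pi^{t}_{1}) < 1 < \text{ucb}^{t}(\pi^{t}_{2})-\epsilon$, so the algorithm does not stop. The remedy is to define the good event as two-sided concentration of the estimate itself: $|\widehat{V}^{t}(\pi)-V(\pi)| < \beta(n^{t}(\pi),\delta_{L}(t))$ for all $\pi$ with $\beta \le 1$. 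The proof of Theorem~\ref{thm:confidencebounds} (Lemmas~\ref{lem:twinconcentration} and~\ref{lem:Vnoughtconcentration}) supplies this at the same union-bound cost. Non-needy policies then satisfy $V(\pi)-2\beta < \text{lcb}^{t}(\pi) \le \text{ucb}^{t}(\pi) < V(\pi)+2\beta$. The case analysis then goes through with slightly adjusted constants, since the clipping can cost one extra $\beta$ in the stopping test.
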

\begin{proof}
The proof follows the same template as that of \textsc{Lucb}~\cite{DBLP:conf/icml/KalyanakrishnanTAS12}. A policy $\pi$ is called \textit{needy} on episode $t$ if its play count is smaller than a constant times $\frac{1}{(\Delta^{\epsilon}_{\pi})^{2}} \ln \frac{1}{\delta_{L}(t)}$. A terminal state $\sigma \in \Sigma$ is called needy if it is the least-played state of some needy policy $\pi \in Y(\sigma)$. Notice that \textit{non-needy} policies have sufficiently small width $\beta$. It is shown that if neither of the policies played on episode $t$ are needy, then some policy must have violated its upper or lower confidence bound. By a union bound similar to that used in Proposition~\ref{prop:lucb-t-correctness}, the probability of such an event is at most a constant times $\frac{\delta}{t^{3}}$.

On the other hand, if indeed some needy policy is played on episode $t$, it means that some needy terminal state $\sigma \in \Sigma$ is played on episode $t$. Since there is a cap on the number of episodes in which $\sigma$ can both be needy and get played, the total number of ``good'' episodes (which play needy $\sigma$'s) is in the order of $\sum_{\sigma \in \Sigma} \frac{1}{(\Delta^{\epsilon}_{\sigma})^{2}} \ln \frac{1}{\delta_{L}(t)}$ for sufficiently large $t \geq t^{\star}$. For an appropriate choice of $t^{\star}$, the probability of not stopping at or before $t^{\star}$ is at most $\frac{\delta}{(t^{\star})^{2}}$---and this property implies the claimed upper bound. 
\end{proof}

\subsubsection{Tighter Upper Bound.}
\label{subsubsec:tighter-upper-bound}

Notice that $|\Sigma|$ factor in the sample-complexity upper bound from Theorem~\ref{thm:lucb-t-sc}. This is an artefact of the union bound over variable play counts, which necessitated a $\frac{1}{t^{|\Sigma|}}$ factor in $\delta_{L}(t)$. An algorithmic change to \textsc{Lucb-T} can remove the $|\Sigma|$ factor. Rather than use all the available samples for each $\sigma \in X(\pi)$, we can use only the first $n^{t}(\pi)$ samples for each (possibly ignoring a lot of data for some terminal states). Since $n^{t}(\pi)$ has to be between $1$ and $t$, the union is only over $t$ (rather than $t^{|\Sigma|}$) events. In view of the same amount of data being used for each terminal state, we refer to this algorithmic variant as \textsc{Lucb-T-Uniform}.

Whereas \textsc{Lucb-T} only needs to store the counts $n^{t}(\sigma)$ and 
$n^{t}_{+}(\sigma)$ for $\sigma \in \Sigma$, \textsc{Lucb-T-Uniform} needs to store the entire sequence of outcomes (0's and 1's) from the plays of each $\sigma \in \Sigma$. Consequently, while \textsc{Lucb-T} uses $\text{polylog}(t)$ memory, \textsc{Lucb-T-Uniform} needs $\Theta(t)$ memory. Thus, while the latter algorithm yields a superior upper bound, it is less convenient in practice, and in fact even performs worse on larger problem instances (coming up in Section~\ref{sec:experiments}).

\subsection{\textsc{Ucb-T} for Regret-minimisation Setting}
\label{subsec:algorithms-ucb-t}

The same idea of viewing policies as bandit arms gives rise to \textsc{Ucb-T} (Algorithm~\ref{alg:ucb-t}), which implements the \textsc{Ucb} algorithm \cite{DBLP:journals/jmlr/Auer02} for bandits. In this case, after initialising plays, a policy with the highest upper confidence bound is played on each episode. For episode $t \geq 1$, confidence bounds are for mistake probability $\delta_{U}(t) \eqdef \frac{1}{|\Pi| t^{|\Sigma| + 4}}.$
\begin{algorithm}
\caption{\textsc{Ucb-T}}
\label{alg:ucb-t} 
\begin{algorithmic}[1]
    \STATE Initialise $n^{t}(\sigma), n^{t}_{+}(\sigma)$ to $0$ for $\sigma \in \Sigma$
    \FOR{$t = 1, 2, \ldots$}
        \IF{there exists $\sigma \in \Sigma$ such that $n^{t}_{\sigma} = 0$}
            \STATE Play an arbitrary policy $\pi \in Y(\sigma)$.
        \ELSE
            \STATE $\pi^{t} \gets \argmax_{\pi \in \Pi} \text{ucb}^{t}(\pi, \delta_{U}(t))$.
            \STATE Play policy $\pi^{t}$.
        \ENDIF
        \STATE Set $n^{t}(\cdot)$, $n^{t}_{+}(\cdot)$ based on policy played, outcome.
    \ENDFOR
\end{algorithmic}
\end{algorithm}

The computation of the policy maximising the UCB (line 6) is done by the PU procedure described in Appendix~\ref{app:pu-procedure}.

\subsubsection{Regret.}
\label{subsec:algorithms-analysis-ucb-t-regret}

The structure put forth by \citet{DBLP:journals/ml/AuerCF02} to upper-bound the regret of \textsc{Ucb} also extends to $\textsc{Ucb-T}$. The same can also be interpreted through the terminology of ``needy'' policies and terminal states, presented in the proof of Theorem~\ref{thm:lucb-t-sc}, but with two differences. First, only the plays of non-optimal policies---the elements of $\Pi \setminus \Pi_{\text{opt}}$---contribute to the regret. Second, for $\sigma \in \Sigma$, we conservatively upper-bound the number of needy plays of $\sigma$ by assuming it is played by the policy in $Y(\sigma) \setminus \Pi_{\text{opt}}$ with the \textit{smallest} gap. Conservatively, each such play still contributes regret according to the policy in $Y(\sigma) \setminus \Pi_{\text{opt}}$ with the \textit{largest} gap. Formally, for $\sigma \in \Sigma$, 
\begin{align*}
\Delta^{\min}_{\sigma} &\eqdef \min_{\pi \in Y(\sigma) \setminus \Pi_{\text{opt}}} \Delta_{\pi};~~~~
\Delta^{\max}_{\sigma} &\eqdef \max_{\pi \in Y(\sigma) \setminus \Pi_{\text{opt}}} \Delta_{\pi}.
\end{align*}
We obtain the following upper bound on regret.
\begin{theorem}
\label{thm:ucb-t-regret}
There exists $c > 0$ such that for $T \geq 2$, the regret $R_{T}$ of the \textsc{Ucb-T} algorithm satisfies $$R_{T} \leq c \cdot \sum_{\sigma \in \Sigma, Y(\sigma) \not\subseteq \Pi_{\text{opt}}} \left( \frac{|\Sigma| \Delta^{\max}_{\sigma}}{(\Delta^{\min}_{\sigma})^{2}} \ln T + \ln |\Pi| \right).$$
\end{theorem}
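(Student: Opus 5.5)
The plan is to follow the analysis of \textsc{Ucb} by \citet{DBLP:journals/ml/AuerCF02}, but to do the counting per terminal state instead of per policy, using Theorem~\ref{thm:confidencebounds} as the concentration tool. I write $L(t) \eqdef \ln\frac{1}{\delta_{U}(t)} = \ln|\Pi| + (|\Sigma|+4)\ln t$. First I dispose of the initialisation phase: it lasts at most $|\Sigma|$ episodes, each costing regret at most $1$, which the bound absorbs. Next I fix the \emph{good event} $G_{t}$: for every policy $\pi$, $V(\pi) \in [\text{lcb}^{t}(\pi,\delta_{U}(t)),\, \text{ucb}^{t}(\pi,\delta_{U}(t))]$. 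As in Proposition~\ref{prop:lucb-t-correctness}, I take a union bound over $|\Pi|$ policies and at most $t^{|\Sigma|}$ count vectors, applying Theorem~\ref{thm:confidencebounds} to each. This gives $\mathbb{P}(G_{t}^{c}) \leq 2|\Pi|\, t^{|\Sigma|}\,\delta_{U}(t) = 2t^{-4}$. So the regret incurred off the good events is $O(1)$ in total.

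On $G_{t}$, suppose \textsc{Ucb-T} plays a non-optimal $\pi$. Then $V(\pi^{\star}) \leq \text{ucb}^{t}(\pi^{\star}) \leq \text{ucb}^{t}(\pi) \leq V(\pi) + 2\beta(n^{t}(\pi),\delta_{U}(t))$. This forces $n^{t}(\pi) < \frac{32}{3\Delta_{\pi}^{2}} L(t)$, so $\pi$ is \emph{needy}. I charge the play to the least-played terminal state $\sigma \in X(\pi)$, i.e.\ the one attaining $n^{t}(\pi)=n^{t}(\sigma)$. Since $\pi \in Y(\sigma)\setminus\Pi_{\text{opt}}$, we have $\Delta_{\pi} \geq \Delta^{\min}_{\sigma}$, and hence $n^{t}(\sigma) < \frac{32}{3(\Delta^{\min}_{\sigma})^{2}}L(T)$. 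Each such charged play increments $n(\sigma)$ by Equation~\eqref{eq:n-update}. Therefore $\sigma$ can receive at most $\frac{32}{3(\Delta^{\min}_{\sigma})^{2}}L(T)+1$ charges, each costing at most $\Delta^{\max}_{\sigma}$. Only $\sigma$ with $Y(\sigma)\not\subseteq\Pi_{\text{opt}}$ ever receive charges. Summing over $\sigma$ gives the $\frac{|\Sigma|\Delta^{\max}_{\sigma}}{(\Delta^{\min}_{\sigma})^{2}}\ln T$ term, coming from the $(|\Sigma|+4)\ln T$ part of $L(T)$ and using $|\Sigma|+4 \leq 5|\Sigma|$.

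The main obstacle is the $\ln|\Pi|$ part of $L(T)$. The naive charging above multiplies it by $\Delta^{\max}_{\sigma}/(\Delta^{\min}_{\sigma})^{2}$, whereas the statement requires a bare $\ln|\Pi|$ with a universal $c$. My first attempt is to split the charges to $\sigma$ into two phases. In \emph{phase A}, $n^{t}(\sigma) < \frac{32}{3(\Delta^{\min}_{\sigma})^{2}}(|\Sigma|+4)\ln T$. In \emph{phase B}, $n^{t}(\sigma)$ lies between this level and the full threshold. Phase A is already covered by the previous paragraph. For phase B, $\beta$ is already below $\Delta^{\min}_{\sigma}$ up to a factor $\sqrt{1+\ln|\Pi|/((|\Sigma|+4)\ln T)}$. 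I will try to show that, in this phase, any needy policy charged to $\sigma$ has gap at most of order $\beta(n^{t}(\sigma),\delta_{U}(t))$, not $\Delta^{\max}_{\sigma}$. Then the regret of phase B is a sum of confidence widths, $\sum_{n}\beta(n,\cdot)\cdot\beta(n,\cdot)$-type terms that telescope to $O\bigl(\ln|\Pi| \cdot \sum_{n} 1/n\bigr)$ restricted to a range whose ratio of endpoints is bounded independently of the gaps. That would give $O(\ln|\Pi|)$ per $\sigma$.

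The key lemma for this is that on $G_{t}$ the gap of the played policy satisfies $\Delta_{\pi^{t}} \leq 2\beta(n^{t}(\pi^{t}),\delta_{U}(t))$. This is exactly the inequality derived above, read as an upper bound on the regret of a single play rather than as a count bound. I therefore expect phase B to be handled by this ``regret $\leq$ width'' argument. The delicate part is verifying that the endpoint ratio of the $n$-range in phase B is indeed gap-free. If it is not, I would fall back to bounding $\ln|\Pi| = |\mathcal{S}|\ln|\mathcal{A}|$ against the $|\Sigma|\ln T$ factor, since every non-terminal state has at least $|\mathcal{A}|$ children and so $|\Sigma| \geq |\mathcal{S}|$.
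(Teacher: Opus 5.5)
Your first two paragraphs are, in substance, the paper's proof. A union bound via Theorem~\ref{thm:confidencebounds} makes bad episodes cost $O(1)$. A non-optimal policy played on the good event is needy, and each such play is charged to its least-played terminal state $\sigma$, counted through $\Delta^{\min}_\sigma$ and costed through $\Delta^{\max}_\sigma$. One repair is needed: since $\text{lcb}$ is clipped at $1$, the step $\text{ucb}^{t}(\pi)\leq V(\pi)+2\beta$ fails when $\widehat{V}^{t}(\pi)>1$. In that case the good event gives $V(\pi)\geq 1-\beta$, so $\Delta_\pi\leq\beta$ directly, and your key inequality $\Delta_\pi\leq 2\beta$ survives.

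You are right that this charging leaves $\frac{\Delta^{\max}_\sigma}{(\Delta^{\min}_\sigma)^2}\ln|\Pi|$ instead of a bare $\ln|\Pi|$. The paper's appendix stops at $\sum_\sigma \frac{2\Delta^{\max}_\sigma}{(\Delta^{\min}_\sigma)^2}\ln\frac{1}{\delta_U(T)}$ and simply asserts the claim. However, your phase-B remedy fails. A play costs at most $2\beta(n,\delta_U(T))$ --- one width, not a product of two. So phase B costs up to $\sum_{n_A\leq n\leq n_B} 2\sqrt{8L(T)/(3n)}$, which is of order $\sqrt{L(T)}\,(\sqrt{n_B}-\sqrt{n_A})$. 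With $n_A=\frac{32(|\Sigma|+4)\ln T}{3(\Delta^{\min}_\sigma)^2}$ and $n_B=\frac{32L(T)}{3(\Delta^{\min}_\sigma)^2}$, this is of order $\ln|\Pi|/\Delta^{\min}_\sigma$. A gap-free ratio $n_B/n_A$ only tames a harmonic sum; a sum of $n^{-1/2}$ scales with $\sqrt{n_B}$, which carries the factor $1/\Delta^{\min}_\sigma$. Indeed, nothing in the argument prevents phase B from consisting of $\Theta(\ln|\Pi|/(\Delta^{\min}_\sigma)^2)$ plays of a policy with gap $\Delta^{\min}_\sigma$.

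Your fallback is the route that works, but $|\Sigma|\geq|\mathcal{S}|$ is too weak: $\ln|\Pi|=|\mathcal{S}|\ln|\mathcal{A}|$ would leave an unbounded $\ln|\mathcal{A}|$ factor. Turn your observation into a leaf count instead. Every edge of the tree has a non-terminal parent, and each non-terminal state has at least $|\mathcal{A}|$ children, so $|\mathcal{S}|+|\Sigma|-1\geq|\mathcal{A}|\,|\mathcal{S}|$. That is, $|\Sigma|\geq(|\mathcal{A}|-1)|\mathcal{S}|+1>|\mathcal{S}|\ln|\mathcal{A}|=\ln|\Pi|$. For $T\geq 2$ this gives $L(T)\leq 7|\Sigma|\ln T$. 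The naive charging of your second paragraph then already yields a universal constant times $\frac{|\Sigma|\Delta^{\max}_\sigma}{(\Delta^{\min}_\sigma)^2}\ln T$ for each $\sigma$, and the explicit $\ln|\Pi|$ in the statement is slack. The additive terms (initialisation, the $+1$ per $\sigma$, the $O(1)$ from bad events) are absorbed because every summand is at least $|\Sigma|\ln 2$, as $\Delta^{\max}_\sigma/(\Delta^{\min}_\sigma)^2\geq 1/\Delta^{\min}_\sigma\geq 1$. If the sum is empty, every policy is optimal and $R_T=0$. Phase B should simply be dropped.
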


The detailed proof is given in Appendix~\ref{app:proof-ucb}. As discussed in Section~\ref{subsubsec:tighter-upper-bound}, the $|\Sigma|$ factor can be removed, at the expense of additional memory and compute. In view of its limited practical value, we skip this variant for regret-minimisation.

\section{Experimental Evaluation}
\label{sec:experiments}

\begin{table*}[b]
\caption{PAC stopping times for player \texttt{x} in 3-card and 5-card Kuhn Poker. Results average $10$ runs, and show one standard error.}
\Description{PAC stopping times for player \texttt{x} in 3-card and 5-card Kuhn Poker. Results average $10$ runs, and show one standard error.}
\label{tab:x_kuhn_stopping_times}
\vspace{-0.3cm}
\centering
\begin{tabular}{@{}lcccc@{}}
\toprule
\multirow{2}{*}{\textbf{Algorithm}} & \multicolumn{4}{c}{\textbf{Experiment Parameters}} \\
\cmidrule(l){2-5}
& \multicolumn{2}{c}{\textbf{3-card Kuhn Poker}} & \multicolumn{2}{c}{\textbf{5-card Kuhn Poker}} \\
\cmidrule(l){2-3} \cmidrule(l){4-5}
& \multicolumn{1}{c}{player \texttt{x}, $\epsilon = 0.05, \delta = 0.05$} & \multicolumn{1}{c}{player \texttt{x}, $\epsilon = 0.1, \delta = 0.1$} & \multicolumn{1}{c}{player \texttt{x}, $\epsilon = 0.05, \delta = 0.05$} & \multicolumn{1}{c}{player \texttt{x}, $\epsilon = 0.1, \delta = 0.1$} \\ \midrule
\textsc{Lucb} & $\mathbf{0.473 \times 10^6 \pm 5.8\%}$ & $\mathbf{0.114 \times 10^6 \pm 7.3\%}$ & $7.947 \times 10^6 \pm 2.3\%$ & $2.138 \times 10^6 \pm 2.2\%$ \\
\textsc{Lucb-T} & $2.170 \times 10^6 \pm 5.4\%$ & $0.531 \times 10^6 \pm 4.8\%$ & $\mathbf{3.637 \times 10^6 \pm 3.7\%}$ & $\mathbf{0.942 \times 10^6 \pm 3.5\%}$ \\
\textsc{Lucb-T-Uniform} & $2.117 \times 10^6 \pm 4.5\%$ & $0.508 \times 10^6 \pm 6.0\%$ & $4.880 \times 10^6 \pm 8.7\%$ & $1.194 \times 10^6 \pm 5.5\%$ \\ \bottomrule
\end{tabular}
\end{table*}

We evaluate our algorithms on two standard benchmark games, Kuhn Poker \cite{kuhnpoker} and Leduc Poker \cite{BayesBluffOppModelling}, as well as a third game, Reconnaissance Blind Tic Tac Toe (RBT) \cite{rbt}. Full descriptions are given in Appendix \ref{app:gamedesc}. All three games are 2-player, zero-sum, with hidden information. We refer to the first and second players as $\mathtt{x}$ and $\mathtt{o}$. Each ``state'' in these games is an action-observation history. Kuhn Poker and Leduc Poker, common benchmarks for Nash-equilibrium computation, have $6$ and $144$ states, respectively. RBT, designed as a smaller version of Reconnaissance Blind Chess, has roughly $10^{7}$ states for $\mathtt{x}$ and $2 \times 10^7$ for $\mathtt{o}$. To the best of our knowledge, our results are the first to demonstrate the feasibility of learning on RBT.

\subsection{Benefit of Sharing Data}

On the smaller Kuhn Poker game, we compare \textsc{Lucb-T} and \textsc{Lucb-T-Uniform} with vanilla
\textsc{Lucb}~\cite{DBLP:conf/icml/KalyanakrishnanTAS12}, which treats
each deterministic policy as a separate bandit arm, with no data being shared across arms. Table~\ref{tab:x_kuhn_stopping_times} shows the average stopping times when player \texttt{x} learns to play agains \texttt{o}'s equilibrium strategy (complementary results are in appendix \ref{app:kuhnexp}). Notice that indeed \textsc{Lucb} is the most economical. The slack in the analysis of 
\textsc{Lucb-T} explains this observation; Kuhn Poker is a small game: with $3$ cards there are only 64 policies.
We devise a 5-card generalisation of Kuhn Poker (refer to appendix \ref{app:gamedesc}) which has 1024 deterministic policies. On this variant, it becomes apparent that $\textsc{Lucb}$ (whose sample complexity scales exponentially in the number of states) falls behind the ``tree'' variants.
This shortcoming precludes the use of \textsc{Lucb} in larger games. Interestingly, we also notice that \textsc{Lucb-T-Uniform}, which outperforms \textsc{Lucb-T} on the 3-card version, performs worse on the $5$-card version. Ignoring a lot of informative data hurts \textsc{Lucb-T-Uniform} in practice as problem sizes get larger.



\subsection{Comparisons with Baselines on Larger Games}

\begin{figure*}[t]
    \centering
    \subfloat[Leduc Poker PAC ($\mathtt{x}$)]{%
        \includegraphics[width=0.248\linewidth]{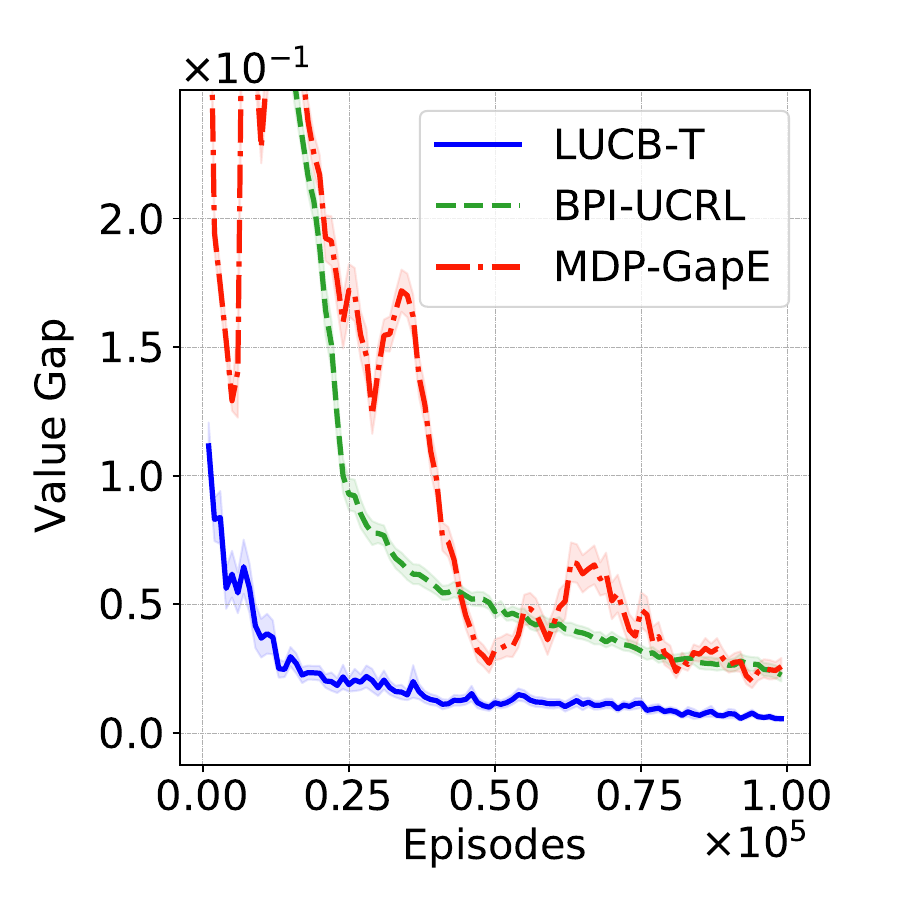}%
        \label{fig:leduc_pac_x_combined_singlefig}%
    }
    \subfloat[Leduc Poker PAC ($\mathtt{o}$)]{%
        \includegraphics[width=0.248\linewidth]{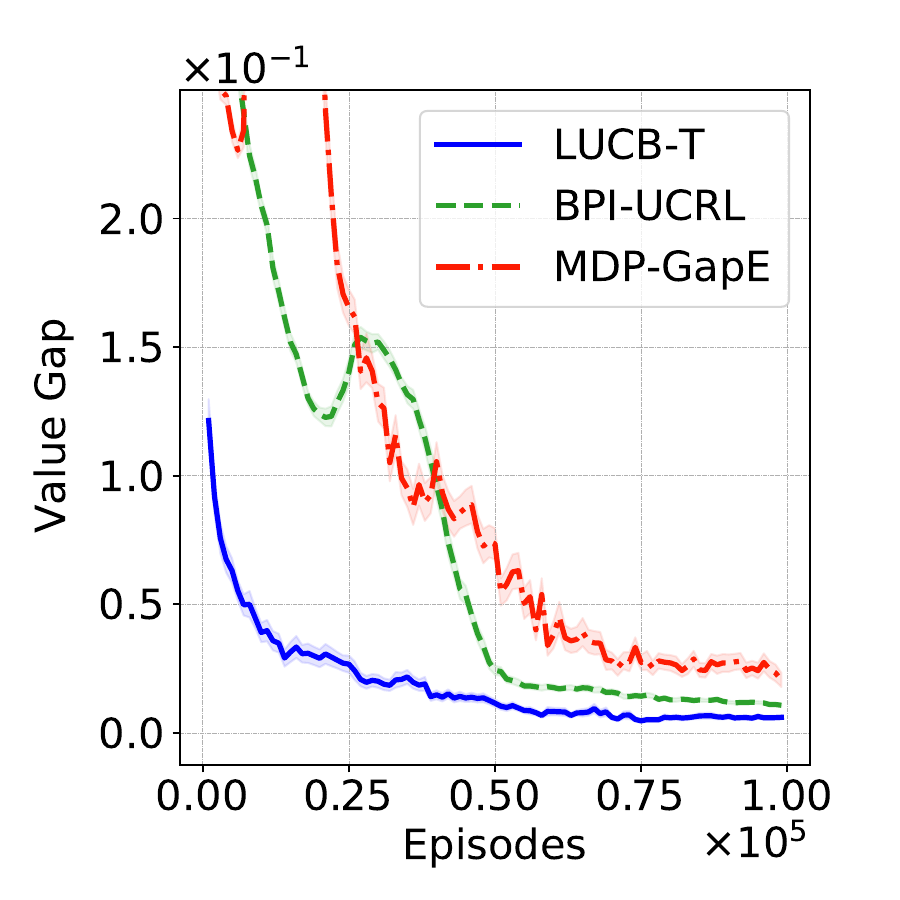}%
        \label{fig:leduc_pac_o_combined_singlefig}%
    }
    \subfloat[RBT PAC ($\mathtt{x}$)]{%
        \includegraphics[width=0.248\linewidth]{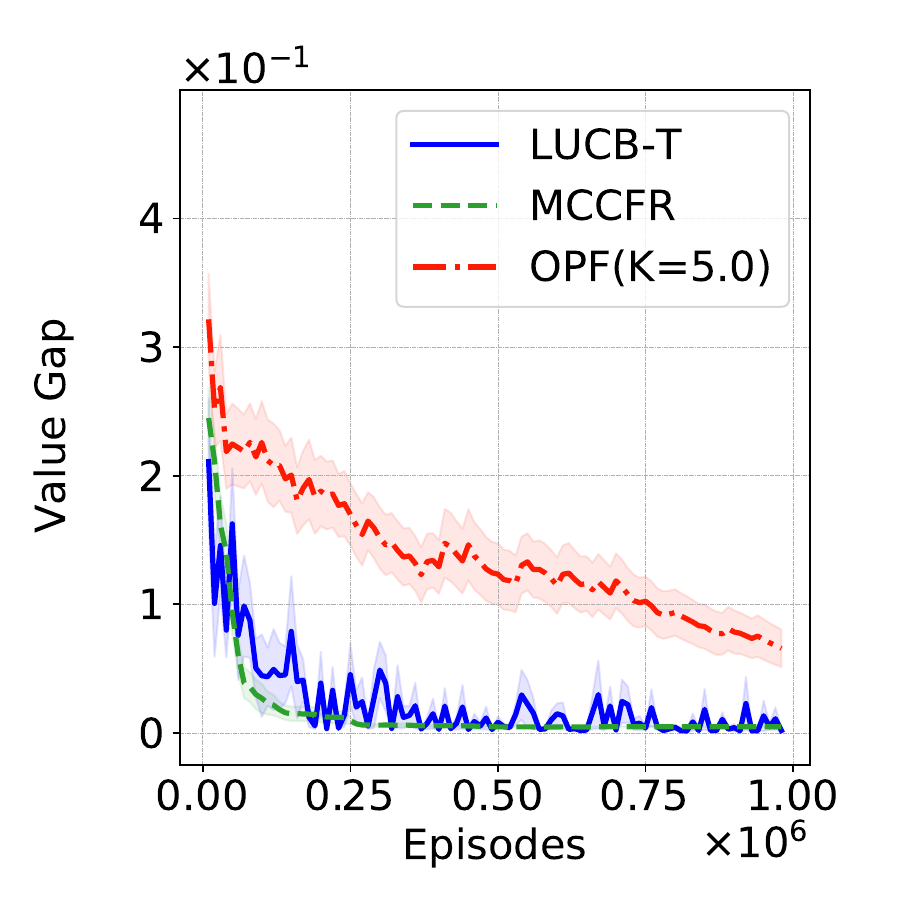}%
        \label{fig:rbt_pac_x_combined_singlefig}%
    }
    \subfloat[RBT PAC ($\mathtt{o}$)]{%
        \includegraphics[width=0.248\linewidth]{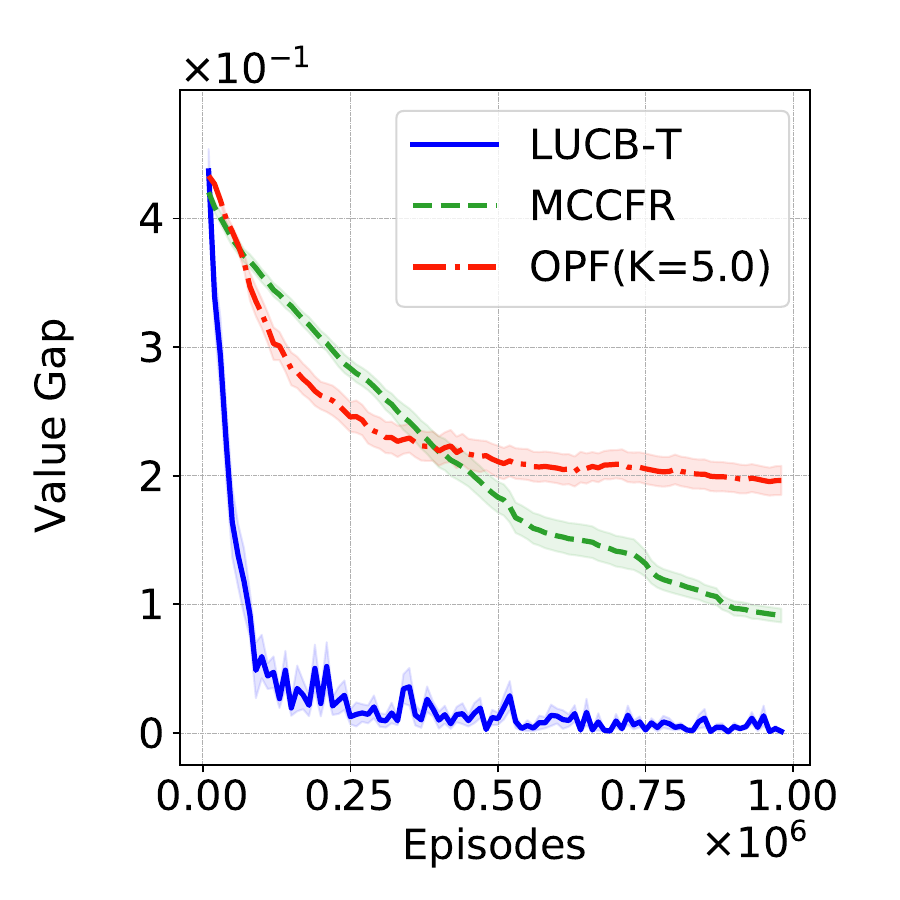}%
        \label{fig:rbt_pac_o_combined_singlefig}%
    }

\vspace{-0.4cm}

    \subfloat[Leduc Poker Regret ($\mathtt{x}$)]{%
        \includegraphics[width=0.248\linewidth]{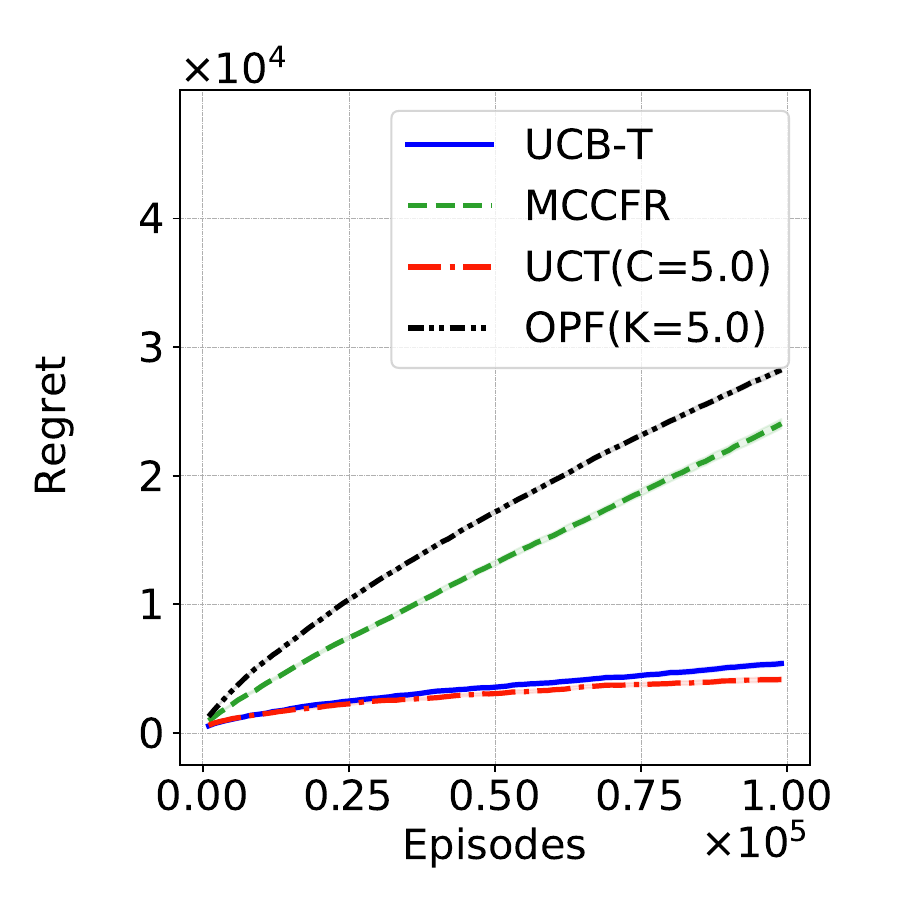}%
        \label{fig:leduc_regret_x_combined_singlefig}%
    }
    \subfloat[Leduc Poker Regret ($\mathtt{o}$)]{%
        \includegraphics[width=0.248\linewidth]{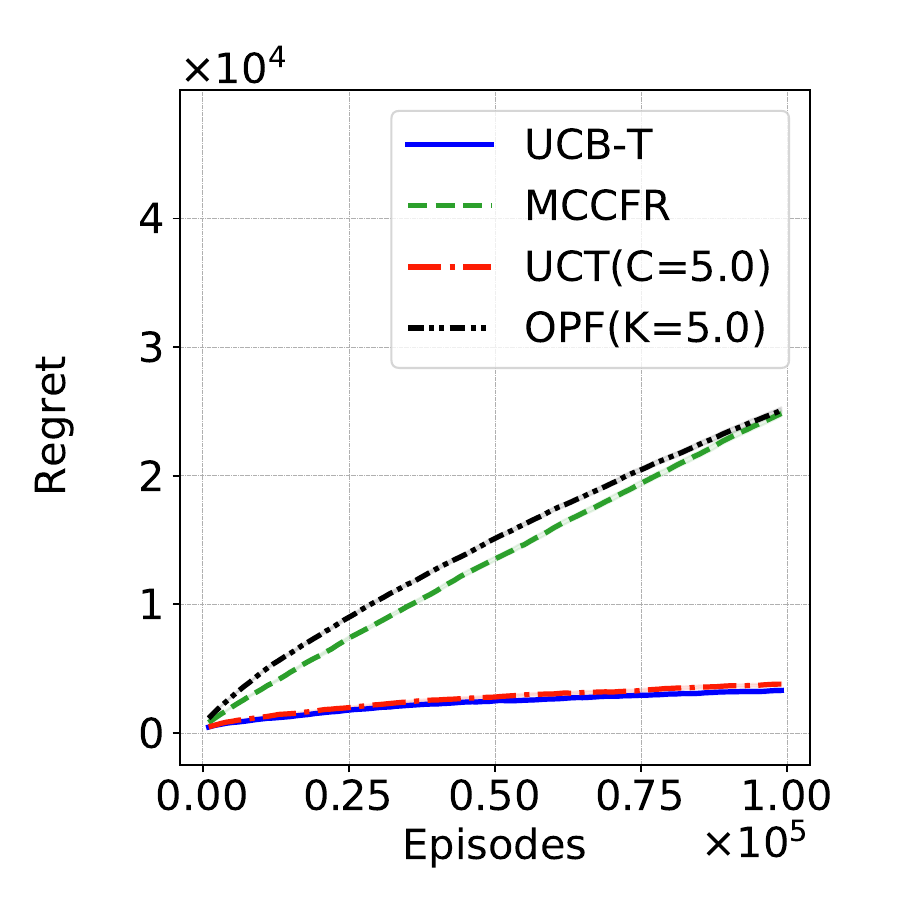}%
        \label{fig:leduc_regret_o_combined_singlefig}%
    }
    \subfloat[RBT Regret ($\mathtt{x}$)]{%
        \includegraphics[width=0.248\linewidth]{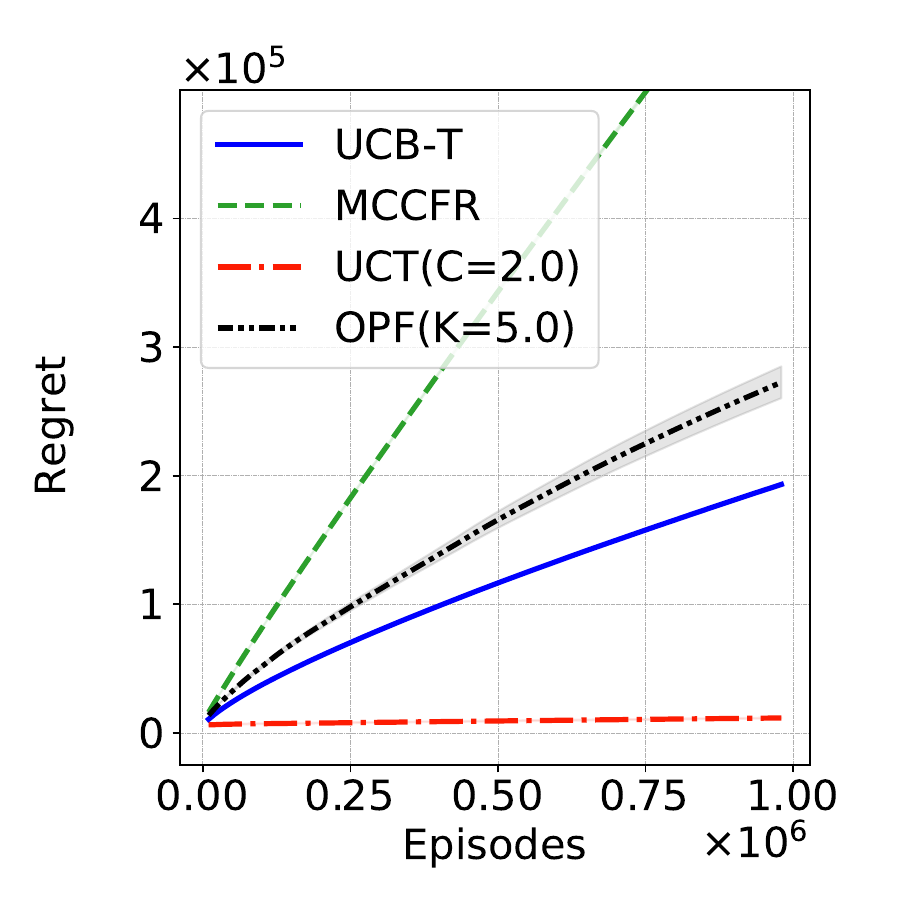}%
        \label{fig:rbt_regret_x_combined_singlefig}%
    }
    \subfloat[RBT Regret ($\mathtt{o}$)]{%
        \includegraphics[width=0.248\linewidth]{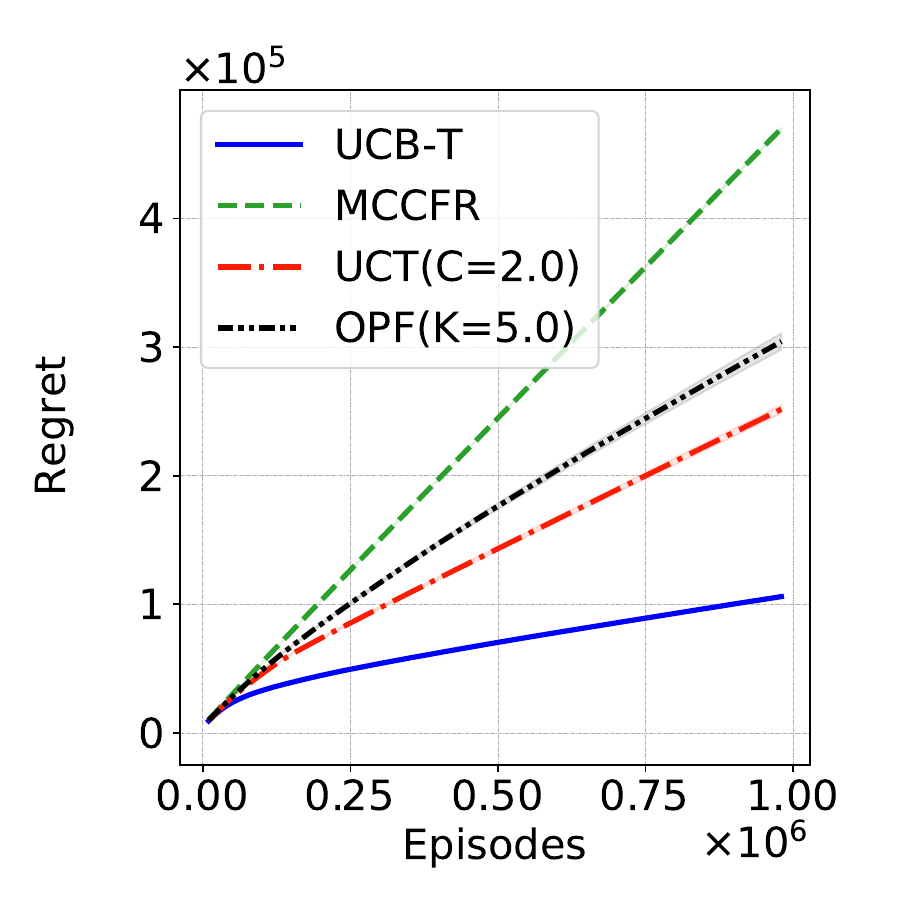}%
        \label{fig:rbt_regret_o_combined_singlefig}%
    }
\vspace{-0.2cm}
    \caption{Performance for Leduc Poker and RBT. The top row (a-d) corresponds to the PAC setting, and  the bottom row (e-h) to regret minimisation results. All plots are averaged over 50 experiments for Leduc Poker and 25 experiments for RBT, showing outcomes for both players x and o. Error bars correspond to one standard error.}
    \label{fig:leduc_rbt_combined_results}
    \Description{This figure shows experiments for Leduc Poker and RBT, for the PAC and regret paradigms. }
\end{figure*}


Much of the literature on exploration in MDPs has not empirically evaluated or provided code for algorithms, which, even if implemented, do not perform well with default hyperparameter settings.
We were able to implement and fine-tune working versions of \textsc{Bpi-Ucrl} \cite{DBLP:conf/alt/TirinzoniMK23} and \textsc{Mdp-GapE} \cite{MDPGAPE}, which we use as baselines for comparison with our PAC algorithm, \textsc{Lucb-T}. For regret minimisation, the algorithms with the best instance-dependent theoretical bounds are \textsc{Mvp} \cite{zhang24Settling}, \textsc{Amb} \cite{FineGrainedGapRegret} and \textsc{StrongEuler} \cite{BeyondValueFunctionGapsRegret}, for which we were unable to source working implementations. However, algorithms in the games literature have been implemented and tested on a number of different applications. We compare \textsc{Ucb-T} with \textsc{Mccfr} \cite{lanctot2009monte}, On-path flipping (\textsc{Opf}) \cite{farina2021model}, and \textsc{Uct} \cite{DBLP:conf/ecml/KocsisS06}. 
Most of these baselines have hyperparameters to balance between exploration and exploitation. We use simplified confidence bounds in our implementations of \textsc{Lucb-T}, \textsc{Ucb-T}, \textsc{Bpi-Ucrl} and \textsc{Mdp-GapE}. We have tuned these hyperparameters and bounds to the best of our ability for optimising performance. Implementation-related details are given in Appendix~\ref{app:code-hyperparameter}. 



Figure \ref{fig:leduc_rbt_combined_results} shows performance plots
for the larger games of Leduc Poker and RBT. In all the games we fix an $\epsilon$-Nash policy for the opponent player, computed using \textsc{Cfr+} \cite{tammelin2014solving}. 
For the PAC setting, we follow the standard practice of showing the ``value gap'' $\mathbb{E}[V(\pi^{\star}) - V(\pi^{t}_{1})]$ as learning proceeds~\cite{DBLP:conf/colt/KaufmannK13}, since stopping times are prohibitively large. For \textsc{Lucb-T} we use the same $\delta$ value for all our experiments. We observe consistently competitive performance for our algorithms across all three games for both players, for both the PAC and regret paradigms. Our algorithms scale well with game size, widening the gap between other algorithms on large problem instances, with RBT player $\mathtt{o}$ (Figure~\ref{fig:rbt_pac_o_combined_singlefig}) especially notable. For RBT, since the PAC algorithms \textsc{Bpi-Ucrl} and \textsc{Mdp-GapE} did not perform well, we have instead plotted a comparison with the regret minimising algorithms (figures~\ref{fig:rbt_pac_x_combined_singlefig} and \ref{fig:rbt_pac_o_combined_singlefig}). Although our algorithms dominate all others that have provable performance guarantees, we notice that the popular $\textsc{Uct}$ algorithm incurs lower regret on smaller problems. By publishing our code, we hope to attract more attention to the empirical evaluation of on-line learning algorithms for games, with the aim of reconciling gaps between theory and practice.


\section{Related Work and Discussion}
While we are not aware of previous work specifically tailored to T-MDPs, there is indeed a vast literature on on-line learning in MDPs, as well as (imperfect-information) games. Unlike our contribution that benefits algorithms both for the PAC and the regret-minimisation settings, most previous work is tailored to one or the other. Specific goals have included {\em minimax PAC bounds}~\cite{DBLP:conf/icml/Dann0WB19, DBLP:conf/icml/MenardDJKLV21}, {\em instance-dependent PAC bounds} \cite{MDPGAPE,DBLP:conf/nips/WagenmakerJ22,DBLP:conf/alt/TirinzoniMK23}, {\em PAC RL with a generative model}~\cite{DBLP:journals/ml/AzarMK13,DBLP:conf/nips/ZanetteKB19, 10.5555/3709347.3743628}, {\em reward-free exploration}~\cite{DBLP:conf/icml/JinKSY20, pmlr-v132-kaufmann21a}, {\em minimax regret bounds}~\cite{DBLP:conf/icml/Dann0WB19, zhang24Settling}, and {\em instance-dependent regret bounds} \cite{NonAsymptoticGapDependentRegret, BeyondValueFunctionGapsRegret,chen2025sharpgapdependentvarianceawareregret}. 



In the PAC setting, the theoretical results most relevant to ours have focused on furnishing sample-complexity upper bounds~\cite{MDPGAPE, DBLP:conf/colt/WagenmakerSJ22, DBLP:conf/alt/TirinzoniMK23}. For these algorithms, the sample complexity upper bound typically takes the form of $\Tilde{\mathcal O}(C(\MM, \epsilon)\log (\nicefrac{1}{\delta}))$, where $C(\MM, \epsilon)$ is an instance-dependent quantity capturing the hardness of learning. These bounds invariably have an inverse dependence on the probability that a state will be visited ($q(\sigma)$ in our work)---which can be arbitrarily small. Whereas our assumption of known rewards lets us avoid this dependence, available \textit{lower bounds} suggest that it cannot be avoided in general MDPs~\cite{DBLP:journals/corr/abs-2311-05638lowerbound}. It is also to be mentioned that upper bounds across different analyses remain incomparable to due varying definitions of the value gaps ($\Delta_{\sigma}$ in our work). We compile a brief technical survey of this line of results in Appendix~\ref{app:instance-dependen-pac-bounds}.

A related topic in this context pertains to ``optimistic'' algorithms, in which the sampling rule selects actions at each state greedily with respect to some UCB quantity.
\citet{DBLP:conf/colt/WagenmakerSJ22} show that optimistic algorithms fail to achieve the instance-optimal sample complexity bound, and that in general more aggressive exploration is necessary. As yet, we are unaware of a sample-complexity lower bound specific to T-MDPs.

Formal bounds on the \textit{regret} of on-line learning algorithms for MDPs are of the form $\mathcal{C}(\mathcal{M)}\log T$, where $\mathcal{C}(\mathcal{M)}$ is a quantity that typically depends on \textit{sub-optimality gaps} as well as  variance-related terms~\cite{NonAsymptoticGapDependentRegret,BeyondValueFunctionGapsRegret}.
An important result of \citet{NonAsymptoticGapDependentRegret} is that optimistic algorithms must incur an additional regret term that depends inversely on $\Delta_{\texttt{min}}$ \textemdash the minimum gap among all state-action pairs. \citet{FineGrainedGapRegret} show that non-optimistic algorithms can reduce the unfavourable dependency on $\Delta_{\texttt{min}}$ by introducing the \textsc{Amb} algorithm, which eliminates this dependency in MDPs with a single optimal action at each state. 
Neither \textsc{Lucb-T} nor \textsc{Ucb-T} is an optimistic algorithm. Although the latter is greedy with respect to a UCB, the UCB itself is for an entire policy (rather than a local state-action pair), computed 
using the PU procedure (Section~\ref{subsec:algorithms-lucb-t}, Appendix~\ref{app:pu-procedure}).



The literature on imperfect-information extensive form games (IIEFGs)~\cite{kuhn1953extensive, shoham2008multiagent} has predominantly focused on the two-player zero-sum setting.
The most common solution concept for IIEFGs is the Nash equilibrium, which is an assignment of strategies to the players such that neither player can gain by unilaterally deviating. Since the exact computation of Nash equilibria is expensive, the preferred alternative is iterative approaches to compute $\epsilon$-Nash equilibria.
The most popular class of algorithms in this regard are from the regret minimisation paradigm. In particular, counterfactual regret minimisation (\textsc{Cfr})~\cite{zinkevich2007regret} and its variants~\cite{lanctot2009monte, tammelin2014solving, brown2017dynamic, brown2019deep, farina2021model} are state-of-the-art algorithms 
that have led to the development of superhuman agents for large IIEFGs such as Heads-Up Limit Texas Hold’em Poker and Heads-Up No Limit Texas Hold’em Poker~\cite{HULHESolved, moravvcik2017deepstack, poker_headsup_libr, brown2019superhuman}. The  problem we address in this paper is a degenerate type of game, in which one player is fixed, and so algorithms such as \textsc{Mccfr} \cite{lanctot2009monte} and \textsc{Opf} \cite{farina2021model} are guaranteed to converge to the best response. This behaviour of theirs is indeed apparent from figures~\ref{fig:leduc_rbt_combined_results}(e)-(h), but the rates of convergence are much slower than \textsc{Ucb-T}, which has specifically been designed for T-MDPs. A theoretical counterpart to this empirical observation is that the regret and sample-complexity bounds 
from the games literature~\cite{bai2022near} are \textit{worst-case} (in terms of $\epsilon$) rather than instance-specific (in terms of value gaps $\Delta_{\sigma}$).

\section{Conclusion and Future Work}

In this paper, we have generalised well-known bandit algorithms \textsc{Lucb} and \textsc{Ucb} to T-MDPs. Our main tool is a concentration inequality (the basis for Theorem~\ref{thm:confidencebounds}) that facilitates the values of multiple policies (an exponentially-sized set) to be estimated simultaneously from a common pool of (polynomially-sized) data. The resulting confidence bound fits naturally into both \textsc{Lucb} and \textsc{Ucb}, and enables polynomial (in time and memory) computation at each decision making step. Our confidence bound may be of independent interest for other analyses involving learning in sequential tasks.

We present theoretical support for our algorithms through upper bounds on sample complexity and regret. We also present supporting empirical results from three IIEFGs, on the task of learning a best response against a fixed opponent.
RBT is an especially promising benchmark due to its much larger scale (10--20 million states) compared to current alternatives.




\begin{acks}
We thank Manas Thakur for helping us parallelize our \textsc{Cfr+} implementation for RBT. 
\end{acks}
\clearpage



\bibliographystyle{ACM-Reference-Format} 
\balance
\bibliography{sample}


\clearpage
\onecolumn
\appendix
\section{Proof of Proposition~\ref{prop:v-terminal}}
\label{app:proof-v-terminal}

The proposition states that for $\pi \in \Pi$,  \begin{align*}
V^{\pi}(s_{1}) = \sum_{\sigma \in X(\pi)} q(\sigma) \cdot \rho(\sigma).   
\end{align*}

For any internal node $\bar{s} \in S$ in our tree, let $\rho^{\bar{s}}(s)$ and $q^{\bar{s}}(s)$ denote the respective quantities defined in \eqref{eqn:qrho},
but on the subtree rooted at $\bar{s}$. Thus, $\rho(s) = \rho^{s_{1}}(s)$, and $q(s) = q^{s_{1}}(s)$. We shall show that for all $\bar{s} \in S$,
\begin{align}
V^{\pi}(\bar{s}) &= \sum_{\sigma \in \Sigma} q^{\bar{s}}(\sigma) \cdot \rho^{\bar{s}}(\sigma),
\label{eq:prop-proof-claim}
\end{align}
with the convention that $q^{\bar{s}}(\sigma) \eqdef 0$ if $\sigma$ is not reachable from $\bar{s}$ by $\pi$. Hence \eqref{eq:prop-proof-claim} implies the statement of the proposition.

The proof is by induction, with the base case being states $\bar{s}$ at level $\mathcal{H}$. For each $\sigma$ that is reachable from $\bar{s}$ by taking $\pi(\bar{s})$ (with positive probability $p(\bar{s})$), we have $q^{\bar{s}}(\sigma) = p(\bar{s}, \pi(\bar{s}), \sigma)$ and $\rho^{\bar{s}}(\sigma) = r(\bar{s}, \pi(\bar{s}), \sigma)$. Hence \eqref{eq:prop-proof-claim} becomes identical to the Bellman equation in \eqref{eqn:bellman}, thereby establishing the base case.

Now consider the case that $\bar{s}$ is at level $h \in \{1, 2, \dots, \mathcal{H} - 1\}$. We assume that the claim is true for all states at level $h + 1$. Hence, expanding \eqref{eq:lcb}, we get
\begin{align*}
V^{\pi}(\bar{s}) &= \sum_{s^{\prime} \in \mathcal{S} \cup \Sigma} p(\bar{s}, \pi(\bar{s}), s^{\prime}) \left( r(\bar{s}, \pi(\bar{s}), s^{\prime}) + \gamma V^{\pi}(s^{\prime})\right)\\
&= \sum_{s^{\prime} \in \mathcal{S} \cup \Sigma} p(\bar{s}, \pi(\bar{s}), s^{\prime}) \left( r(\bar{s}, \pi(\bar{s}), s^{\prime}) + \gamma \sum_{\sigma \in \Sigma} q^{s^{\prime}}(\sigma) \rho^{s^{\prime}}(\sigma) \right)\\
&= \sum_{s^{\prime} \in \mathcal{S} \cup \Sigma} p(\bar{s}, \pi(\bar{s}), s^{\prime})  \sum_{\sigma \in \Sigma} q^{s^{\prime}}(\sigma) \left(r(\bar{s}, \pi(\bar{s}), s^{\prime}) + \gamma \rho^{s^{\prime}}(\sigma) \right) \\
&= \sum_{\sigma \in \Sigma} \sum_{s^{\prime} \in \mathcal{S} \cup \Sigma} p(\bar{s}, \pi(\bar{s}), s^{\prime})   q^{s^{\prime}}(\sigma) \left(r(\bar{s}, \pi(\bar{s}), s^{\prime}) + \gamma \rho^{s^{\prime}}(\sigma) \right),
\end{align*}
where we have used the fact that $\sum_{\sigma \in \Sigma} q^{s^{\prime}}(\sigma) = 1$. By the tree property, each $\sigma \in \Sigma$ has at most one path from $\bar{s}$. If $\sigma$ has such a path, let $s^{\prime}(\sigma)$ be the child of $\bar{s}$ along this path. Hence
\begin{align*}
V^{\pi}(\bar{s}) &= \sum_{\sigma \in \Sigma} p(\bar{s}, \pi(\bar{s}), s^{\prime}(\sigma))   q^{s^{\prime}(\sigma)}(\sigma) \left(r(\bar{s}, \pi(\bar{s}), s^{\prime}(\sigma)) + \gamma \rho^{s^{\prime}(\sigma)}(\sigma) \right).
\end{align*}
By the definitions in \eqref{eqn:qrho}, $r(\bar{s}, \pi(\bar{s}), s^{\prime}(\sigma)) + \gamma \rho^{s^{\prime}}(\sigma) = \rho^{\bar{s}}(\sigma)$
and
$p(\bar{s}, \pi(\bar{s}), s^{\prime}(\sigma)) q^{s^{\prime}(\sigma)}(\sigma) = q^{\bar{s}}(\sigma)$. Substituting these values above, we obtain \eqref{eq:prop-proof-claim}.

\section{Proof of Lemma~\ref{lem:B-NCD}}
\label{app:proof-cb}

Let $B_1$, $B_2$, $\dots$, $B_n$ be $n$ arbitrary $B-$random variables. Consider orderings of $B_i$ that represent the sequence of events happening during episodes of the algorithm. For $n=3$, $(B_2, B_3, B_1)$ is an example ordering that corresponds to the situation where $B_2$ was initialised with a value either in the same, or previous episode compared to $B_3$; with a similar relationship for $B_3$ and $B_1$. Let $\Lambda_1$, $\Lambda_2$, $\dots$, $\Lambda_n$ be random variables that correspond to indices in any ordering. $\Lambda_i$ takes value $j$ if $B_j$ is at index $i$ in the ordering. 

We want to show the following results: 

\begin{equation} \label{NCD_one}
    \mathbb{P}\{\bigcap_i^n(B_i = 1)\} \leq \prod_i^n \mathbb{P}\{(B_i = 1)\};
\end{equation}

\begin{equation} \label{NCD_zero}
    \mathbb{P}\{\bigcap_i^n(B_i = 0)\} \leq \prod_i^n \mathbb{P}\{(B_i = 0)\}.
\end{equation}

We start by conditioning the LHS of equation \ref{NCD_one} on all the possible values of $\Lambda_i$'s.

\begin{align*}
    & \mathbb{P}\{\bigcap_i^n(B_i = 1)\} \\ =& \sum_{\Lambda_1 = \lambda_1}\sum_{\Lambda_2 = \lambda_2}\dots\sum_{\Lambda_n = \lambda_n}\mathbb{P}\{(B_1=1)\cap(B_2=1)\cap\dots\cap(B_n=1)\cap(\Lambda_1=\lambda_1)\cap(\Lambda_2=\lambda_2)\cap\dots\cap(\Lambda_n=\lambda_n)\} \\
    =& \sum_{\Lambda_1 = \lambda_1}\sum_{\Lambda_2 = \lambda_2}\dots\sum_{\Lambda_n = \lambda_n}\mathbb{P}\{(\Lambda_1=\lambda_1)\}\times\mathbb{P}\{(B_{\lambda_1}=1)|(\Lambda_1=\lambda_1)\}\times\mathbb{P}\{(\Lambda_2=\lambda_2)|(\Lambda_1=\lambda_1)\cap(B_{\lambda_1}=1)\}\\ & \times 
    \mathbb{P}\{(B_{\lambda_2}=1)|(\Lambda_1=\lambda_1)\cap(B_{\lambda_1}=1)\cap(\Lambda_2=\lambda_2)\}\times\dots\times\mathbb{P}\{(\Lambda_n=\lambda_n)|\dots\}\times\mathbb{P}\{(B_{\lambda_n}=1)|\dots\}.
\end{align*}

Consider a term in this product of the form $\mathbb{P}\{(B_{\lambda_m}=1)|(\Lambda_1=\lambda_1)\cap\dots\cap(\Lambda_m=\lambda_m)\cap(B_{\lambda_1}=1)\cap\dots\cap(B_{\lambda_{m-1}}=1)\}$, where $1 \leq m \leq n$. As the ordering is decided by $\Lambda_1\dots\Lambda_m$, there are two cases:
\begin{enumerate}
    \item $B_{\lambda_{m-1}}$ and $B_{\lambda_{m}}$ were initialised in the same episode, or
    \item $B_{\lambda_{m}}$ was initialised after $B_{\lambda_{m-1}}$.
\end{enumerate}

In the first case, $\mathbb{P}\{(B_{\lambda_{m}}=1)|(\Lambda_1=\lambda_1)\cap\dots\cap(\Lambda_m=\lambda_m)\cap(B_{\lambda_1}=1)\cap\dots\cap(B_{\lambda_{m-1}}=1)\} = 0$, and in the second case, $\mathbb{P}\{(B_{\lambda_{m}}=1)|(\Lambda_1=\lambda_1)\cap\dots\cap(\Lambda_m=\lambda_m)\cap(B_{\lambda_1}=1)\cap\dots\cap(B_{\lambda_{m-1}}=1)\} = \mathbb{P}\{(B_{\lambda_{m}}=1)$. Hence we have, $\mathbb{P}\{(B_{\lambda_{m}}=1)|(\Lambda_1=\lambda_1)\cap\dots\cap(\Lambda_m=\lambda_m)\cap(B_{\lambda_1}=1)\cap\dots\cap(B_{\lambda_{m-1}}=1)\} \leq \mathbb{P}\{(B_{\lambda_{m}}=1)$. Applying this substitution to all such terms, 

\begin{align*}
    & \mathbb{P}\{\bigcap_i^n(B_i = 1)\} \\ &\leq \prod_i^n \mathbb{P}\{(B_i = 1)\}
    \sum_{\Lambda_1 = \lambda_1}\sum_{\Lambda_2 = \lambda_2}\dots\sum_{\Lambda_n = \lambda_n}\mathbb{P}\{(\Lambda_1=\lambda_1)\}\times\mathbb{P}\{(\Lambda_2=\lambda_2)|(\Lambda_1=\lambda_1)\cap(B_{\lambda_1}=1)\}\times
    \dots\times\mathbb{P}\{(\Lambda_n=\lambda_n)|\dots\} \\
    &\leq \prod_i^n \mathbb{P}\{(B_i = 1)\}
    \sum_{\Lambda_1 = \lambda_1} \mathbb{P}\{(\Lambda_1=\lambda_1)\}\sum_{\Lambda_2 = \lambda_2} \mathbb{P}\{(\Lambda_2=\lambda_2)|(\Lambda_1=\lambda_1)\cap(B_{\lambda_1}=1)\} \sum_{\Lambda_3 = \lambda_3} 
    \dots \sum_{\Lambda_n = \lambda_n} \mathbb{P}\{(\Lambda_n=\lambda_n)|\dots\}.
\end{align*}
After rearranging the remaining terms inside the summations, each summation collapses to sum up to 1, leading to the result in  \eqref{NCD_one}. 


We now present the key steps in the proof of \eqref{NCD_zero}. Let $D_m$ be the set of $B-$random variables that are indexed lower than $m$ and were generated in the same episode as $B_{\Lambda_m}$. Note that $D_m$ may be empty. It follows that 
\begin{align}
&\mathbb{P}\{(B_{\lambda_{m}}=0)|(\Lambda_1=\lambda_1)\cap\dots\cap(\Lambda_m=\lambda_m)\cap(B_{\lambda_1}=0)\cap\dots\cap(B_{\lambda_{m-1}}=0) \cap D_m = d_m\}  \nonumber \\
\leq & 1-\dfrac{\mathbb{P}(B_{\lambda_{m}} = 1)}{1 - \sum_{j \in d_m} \mathbb{P}(B_{\lambda_{j}} = 1)} \leq 1-\mathbb{P}(B_{\lambda_{m}} = 1) = \mathbb{P}(B_{\lambda_{m}} = 0). \nonumber
\end{align}
By the law of total probability, 
\begin{align}
&\mathbb{P}\{(B_{\lambda_{m}}=0)|(\Lambda_1=\lambda_1)\cap\dots\cap(\Lambda_m=\lambda_m)\cap(B_{\lambda_1}=0)\cap\dots\cap(B_{\lambda_{m-1}}=0)\}\leq  1-\dfrac{\mathbb{P}(B_{\lambda_{m}} = 1)}{1 - \sum_{j \in d_m} \mathbb{P}(B_{\lambda_{j}} = 1)} \leq 1-\mathbb{P}(B_{\lambda_{m}} = 1) = \mathbb{P}(B_{\lambda_{m}} = 0).
\label{eq:conditional2}
\end{align}
Using \eqref{eq:conditional2}, the proof of \eqref{NCD_zero} follows along the same lines as the proof of \eqref{NCD_one}.


\section{Proof of Theorem~\ref{thm:ucb-t-regret}}
\label{app:proof-ucb}

The theorem states that there exists $c > 0$ such that for $T \geq 2$, the regret $R_{T}$ of the \textsc{Ucb-T} algorithm satisfies $$R_{T} \leq c \cdot \sum_{\sigma \in \Sigma, Y(\sigma) \not\subseteq \Pi_{\text{opt}}} \left( \frac{||\Sigma|\Delta^{\max}_{\sigma}}{(\Delta^{\min}_{\sigma})^{2}} \ln T + \ln |\Pi| + \ln |\Sigma| \right).$$ Consider episode $1 \geq t \leq T$ in a run over $T \geq 2$ episodes. On episode $t$, a policy $\pi \notin \Pi_{\text{opt}}$ is called \textit{needy} if $\beta(n^{t}(\pi), \delta_{U}(T)) \geq \frac{\Delta_{\pi}}{2}$. As in the proof for \textsc{Lucb-T}, a terminal state $\sigma \in \Sigma$ is called needy if it is the least-played state of some needy policy $\pi \in Y(\sigma)$. 


Again, we observe that \textit{non-needy} policies have sufficiently small widths $\beta$. Hence, on episode $t$, a \textit{non-needy} policy $\pi \notin \Pi_{\text{opt}}$ can be played (due its having the highest UCB) only if (1) $\text{ucb}^{t}(\pi^{\star}, \delta_{U}(t)) < V(\pi^{\star})$ or (2) $\text{lcb}^{t}(\pi, \delta_{U}(t)) > V(\pi)$. Applying Theorem~\ref{thm:confidencebounds} along with a union bound, we upper-bound the probability of any non-needy policy being played in episode $t$ by $|\Pi||\Sigma| t^{|\Sigma|} (2 \delta_{U}(t)) = \frac{2}{t^{3}}.$ The expected number of plays of non-needy policies up to episode $T$ is at most $\sum_{t = 1}^{T} \frac{2}{t^{3}} \leq 2.$ Each episode trivially has regret at most $1$. 


If a \textit{needy} policy is played on an episode, then it means at least one needy terminal state $\sigma$ is played on that episode. The number of episodes for which $\sigma$ is needy and played is at most $\lceil \frac{2}{(\Delta^{\text{min}}_{\sigma})^{2}} \ln \frac{1}{\delta_{U}(T)} \rceil$, since beyond that its confidence width cannot exceed $\frac{\Delta_{\pi}}{2}$ for any $\pi \in Y(\sigma) \setminus \Pi_{\text{opt}}$. The contribution of $\sigma$ to the regret on this episode is at most $\Delta^{\max}_{\sigma}$. Hence, in aggregate, the cumulative regret from episodes on which needy policies are played is at most $$\sum_{\sigma \in \Sigma, Y(\sigma) \not\subseteq \Pi_{\text{opt}}} \frac{2 \Delta^{\max}_{\sigma}}{(\Delta^{\min}_{\sigma})^{2}} \ln \frac{1}{\delta_{U}(T)}.$$ Summing the cumulative regret from the plays of needy and non-needy policies yields the claimed upper bound.

\section{Description of the Games in the Test Suite} \label{app:gamedesc}

Kuhn Poker \cite{kuhn1953extensive} and Leduc Poker \cite{BayesBluffOppModelling} are poker variants that serve as popular benchmarks for learning algorithms in AI research. Kuhn Poker has three cards (King, Queen and Jack), while Leduc has six (2 Kings, 2 Queens and 2 Jacks). We also show experiments for a variant of Kuhn Poker with 5 cards (Ten, Jack, Queen, King and Ace), which has is identical to the three card variant in every other way. In both games, players are dealt a single card and have fixed betting amounts. Kuhn Poker consists of a single betting round after which there is a showdown, whereas in Leduc Poker, a community card is revealed after the first round, followed by another round and showdown. Leduc Poker also allows for a single raise in each betting round, with fixed raise amounts of 2 and 4. The utilities at terminal states depend on the size of the pot won - with utilities ranging between $[-2,2]$ for Kuhn Poker and $[-13,13]$ for Leduc Poker.

RBT \cite{rbt} is an imperfect information version of Tic Tac Toe. Players cannot see each other's moves, instead gaining information about the true state through a reconnaissance move (also called a sense move), where they select a $2 \times 2$ region of the Tic Tac Toe board for observation. Each move is preceded by a sense move, after which players must select one of the empty squares on the board to play an `$\mathtt{x}$' or an `$\mathtt{o}$'. Figure \ref{fig:RBT_states} shows possible true board configurations given the information available to a player. Since players are not aware of the true state, they might choose a square that already contains an opponent move. By convention, this results in an immediate loss. It is important to note that unlike in poker, all actions are private in RBT. This leads to a different number of terminal states for each player, whereas in poker both players share the same terminal states. Utilities belong to the set $\{-1,0,1\}$, corresponding to a loss, a draw and a win, respectively. Figure \ref{fig:RBT_history} shows an example of the first few moves of a game of RBT. Table \ref{tab:games_mdp} coalesces information about the parameters of each game.

\begin{figure}[ht]
    \centering
    \subfloat[]{%
        \includegraphics[width=0.30\linewidth]{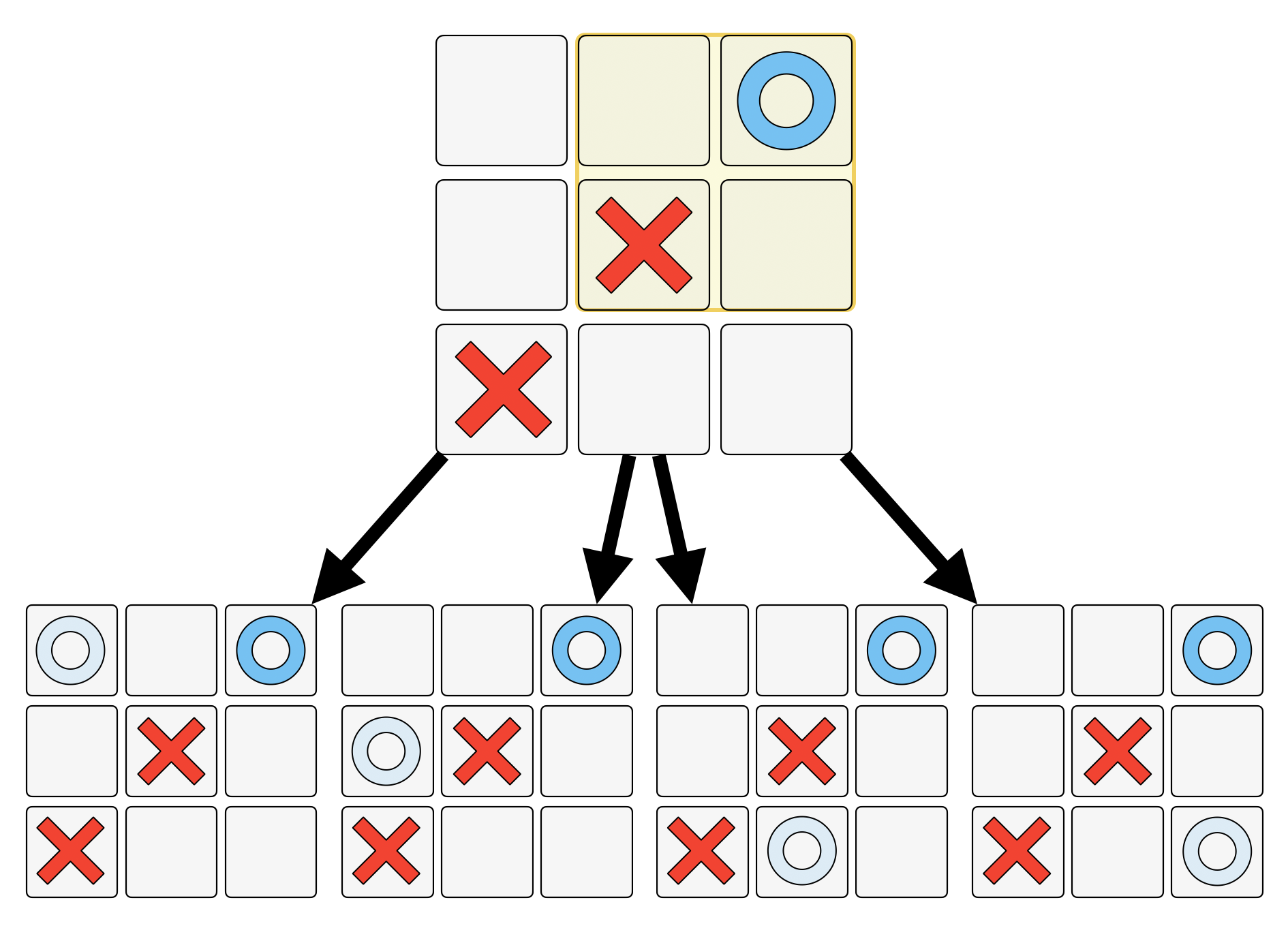}%
        \label{fig:RBT_states}%
    }

    \subfloat[]{%
        \includegraphics[width=0.66\linewidth]{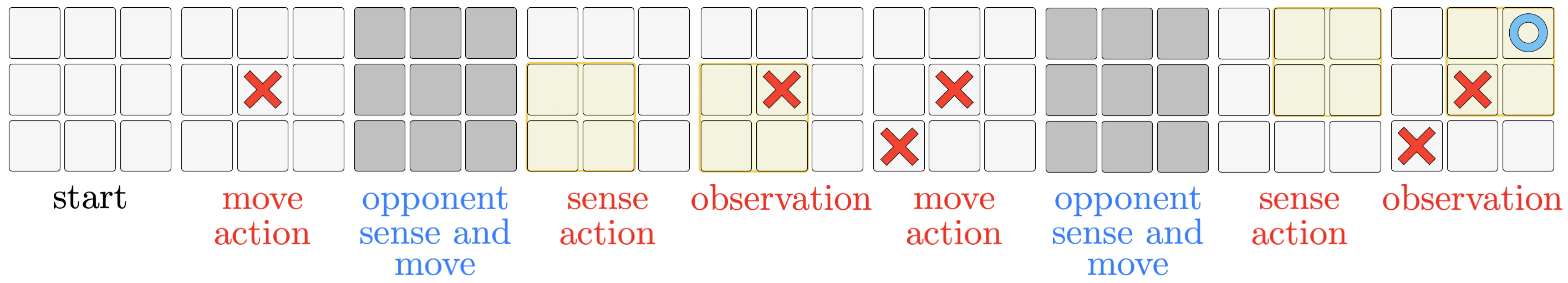}%
        \label{fig:RBT_history}%
    }
    \caption{(a) A visualisation of the uncertainty in RBT after playing a sense move. The sense region (in yellow) reveals information about one of the opponent moves ($\mathtt{o}$), but the other is unknown to player $\mathtt{x}$ \textemdash it could be in any of the four locations shown below. (b) A possible history of actions and observations in RBT.}
    \label{fig:combined_RBT_figures}
    \Description{This figure shows examples of RBT games and histories. }
\end{figure}

\begin{table}[ht]
    \caption{Details of the MDP formulation of the games in the test suite.}
    \Description{Details of the MDP formulation of the games in the test suite.}
    \label{tab:games_mdp}
    \begin{center}
        \begin{tabular}{lccccc}
        \toprule
        Game & $\mathcal{S}$ (\# of information sets) & $\mathcal{A}$ & r & $\Sigma$ (\# of terminal states) & $\mathcal{H}$ \\
        \midrule
        3-card Kuhn Poker & 6 per player & 4 (check, bid, call, fold) & $[-2, 2]$ & 15 & 2\\
        5-card Kuhn Poker & 10 per player & 4 (check, bid, call, fold) & $[-2, 2]$ & 25 & 2\\
        Leduc Poker & 144 per player & 5 (check, bid, call, raise, fold) & $[-13, 13]$ & 417 & 4\\
        RBT & \shortstack{$10^7$ for player $\mathtt{x}$ \\ $2\times10^7$ for player $\mathtt{o}$} & 9 move actions, 4 sense actions & $[-1, 1]$ & \shortstack{$2 \times 10^7$ for player $\mathtt{x}$ \\ $5\times10^7$ for player $\mathtt{o}$} & 9\\
        \bottomrule
        \end{tabular}
    \end{center}
\end{table}

\section{More Kuhn Poker Experiments}
\label{app:kuhnexp}

Table \ref{tab:o_kuhn_stopping_times} shows a comparison of PAC stopping times for variants of \textsc{Lucb} on 3-card Kuhn Poker and 5-card Kuhn Poker for player \texttt{o}. We see similar trends as in Table~\ref{tab:x_kuhn_stopping_times}. 

\begin{table*}[h]
\caption{PAC stopping times for player \texttt{o} in 3-card compared to 5-card Kuhn Poker (averaged over 10 experiments).}
\Description{PAC stopping times for player \texttt{o} in 3-card compared to 5-card Kuhn Poker (averaged over 10 experiments).}
\centering
\begin{tabular}{@{}lcccc@{}}
\toprule
\multirow{2}{*}{\textbf{Algorithm}} & \multicolumn{4}{c}{\textbf{Experiment Parameters}} \\
\cmidrule(l){2-5}
& \multicolumn{2}{c}{\textbf{3-card Kuhn Poker}} & \multicolumn{2}{c}{\textbf{5-card Kuhn Poker}} \\
\cmidrule(l){2-3} \cmidrule(l){4-5}
& \multicolumn{1}{c}{player \texttt{o}, $\epsilon = 0.05, \delta = 0.05$} & \multicolumn{1}{c}{player \texttt{o}, $\epsilon = 0.1, \delta = 0.1$} & \multicolumn{1}{c}{player \texttt{o}, $\epsilon = 0.05, \delta = 0.05$} & \multicolumn{1}{c}{player \texttt{o}, $\epsilon = 0.1, \delta = 0.1$} \\ \midrule
\textsc{Lucb} & $\mathbf{0.163 \times 10^6 \pm 4.3\%}$ & $\mathbf{0.057 \times 10^6 \pm 4.0\%}$ & $\mathbf{3.154 \times 10^6 \pm 1.5\%}$ & $1.291 \times 10^6 \pm 0.9\%$ \\
\textsc{Lucb-T} & $1.896 \times 10^6 \pm 2.4\%$ & $0.496 \times 10^6 \pm 4.1\%$ & $3.475 \times 10^6 \pm 4.4\%$ & $\mathbf{0.861 \times 10^6 \pm 4.5\%}$ \\
\textsc{Lucb-T-Uniform} & $1.671 \times 10^6 \pm 1.5\%$ & $0.401 \times 10^6 \pm 4.1\%$ & $4.150 \times 10^6 \pm 7.1\%$ & $1.045 \times 10^6 \pm 6.4\%$ \\ \bottomrule
\end{tabular}
\label{tab:o_kuhn_stopping_times}
\end{table*}

\section{Instance-dependent PAC Bounds}
\label{app:instance-dependen-pac-bounds}

In this section, we review the recent developments in the instance-dependent PAC regime for MDPs. For the sample complexity bounds, the notation $\mathcal{O}$ hides absolute constants, whereas $\Tilde{\mathcal{O}}$ suppresses poly logarithmic terms along with absolute constants.

\begin{itemize}
    \item \textbf{\textsc{Mdp-gape}}: One of the early works to provide an instance-dependent PAC bound was \cite{MDPGAPE}. The authors propose \textsc{Mdp-gape}, an adaptation of the UGapE bandit algorithm \cite{DBLP:conf/nips/GabillonGL12} for episodic MDPs. In this approach, the sampling rule selects an action using the UGap-E algorithm at the starting state, while at the other states, it is optimistic w.r.t. the upper confidence bounds on the values of the state-action pairs. The sample complexity of \textsc{Mdp-gape} was shown to be $$\mathcal{O}\left(\sum_{a_1}\frac{\mathcal{H}(BK)^{\mathcal{H}-1}}{(\Delta_1(s_1, a_1) \vee \epsilon )^2} \left ( \log \nicefrac{1}{\delta} + B\mathcal{H}\log(BK)  \right ) \right),$$ where $\mathcal{H}$ is the horizon, $K$ denotes the maximum number of actions available at any state, $B$ denotes the branching factor which can been seen as the maximum number of next states that can be reached by taking any action at any state, and $\Delta_1(s_1, a_1)=V^{\pi^\star}(s_1) - Q^{\star}(s_1, a_1)$ denotes the sub-optimality of action $a_1$ at state $s_1$ and stage $1$. While this bound features {\em value gaps} $\Delta(s, a)$ of the actions, \textsc{Mdp-gape} is shown to only identify a PAC action at the starting state of the MDP.
    \item \textbf{\textsc{Moca}}: For PAC policy identification, \cite{DBLP:conf/colt/WagenmakerSJ22} proposed \textsc{Moca} (monte carlo action elimination) with a sample complexity bound of $$\Tilde{\mathcal O}\left(\left (\sum_{h=1}^{\mathcal{H}}\inf_{\pi}\max_{s,a}\min \left \{\frac{1}{p^\pi_h(s,a)\ \Delta_h(s,a)^2}, \frac{W_h(s)^2}{p^\pi_h(s,a)\ \epsilon^2} \right \} +  \frac{\mathcal{H}^2 |OPT(\epsilon)|}{\epsilon^2}\right )\log (\nicefrac{1}{\delta})\right),$$ that depends on {\em maximum reachability} $W_h(s)$ of a state at step $h$, the {\em value gaps} $\Delta_h(s,a)$ of the state-action pairs at step $h$, and the number of {\em near-optimal} state-action pairs $|OPT(\epsilon)|$ in the MDP\footnote{$p^\pi_h(s,a)$ in the sample complexity bound denotes the probability of reaching $(s,a)$ at stage $h$ under policy $\pi$.}. The intuition behind the algorithm is to learn to aggressively explore the MDP in order to eliminate the sub-optimal actions at each state (first term in the bound). 
    Using a few additional samples for the remaining near-optimal state-action pairs (second term in the bound), a PAC policy is identified. \cite{DBLP:conf/colt/WagenmakerSJ22} also proved that low regret algorithms\textemdash in particular, optimistic algorithms\footnote{Recall that these are algorithms in which the sampling rule selects actions at each state greedily with respect to some UCB quantity.}\textemdash fail to achieve the instance-optimal sample complexity bound on a class of MDPs, and that in general a more aggressive exploration is necessary instead on relying on optimistic exploration.
    \item \textbf{\textsc{Bpi-Ucrl}}: While optimistic algorithms cannot achieve the instance-optimal sample complexity in episodic MDPs, \cite{DBLP:conf/alt/TirinzoniMK23} provide new insights in to the instance-specific sample complexity of an optimistic algorithm \textsc{Bpi-Ucrl}, which has been previously analyzed in \cite{pmlr-v132-kaufmann21a}. The sample complexity, $$\Tilde{\mathcal O}\left ( \left ( \sum_{h \in \mathcal{H}}\sum_{s\in \mathcal{S}}\sum_{a \in \mathcal{A}}\frac{\mathcal{H}^4}{p^{\text{min}}_h(s,a)\ \max{\{\Tilde{\Delta}_h(s,a)^2}, \epsilon\}^2}\right )\log \nicefrac{1}{\delta}\right ),$$ features $p^{\text{min}}_h(s,a)$ which is the minimum probability of reaching $(s,a)$ at stage $h$ across all deterministic policies $\pi \in \Pi$, and a new notion of sub-optimality gap for state-action pairs called the {\em conditional return gap}, $$\Tilde{\Delta}_h(s,a) = \min_{\pi \in \Pi: p^\pi_h(s,a)>0} \max_{\ell \in [\mathcal{H}]} \max_{s' \in \mathcal{S}: p^{\pi}_{\ell}(s')>0} V^{\star}_{\ell}(s') - V^{\pi}_{\ell}(s').$$ Intuitively, these gaps capture the following idea. In order to learn $(s,a, h)$ is suboptimal, all polices visiting $(s,a,h)$ must be identified as suboptimal. For any such policy $\pi$, the gap definition states that the hardness of determining its sub-optimality is proportional to the {\em maximum value gap} at a state reachable by $\pi$. This reflects the fact that the easiest place to identify the sub-optimality of a policy is where the gap is the largest. While $\Tilde{\Delta}_h(s,a)$ has been shown to be larger\textemdash therefore, better\textemdash than the value gaps of \cite{DBLP:conf/colt/WagenmakerSJ22}, $p^{\text{min}}_h(s,a)$ can be arbitrarily small.
    \item  \textbf{\textsc{Pedel}}: Since the upper bounds across different algorithms are incomparable, a different line of work has focused on establishing lower bounds on the number of samples required for PAC policy identification \cite{DBLP:journals/corr/abs-2311-05638lowerbound}. In particular, it was shown that the best $\log\nicefrac{1}{\delta}$ dependency that any algorithm can achieve in order to identify a PAC-optimal policy is $$\left (2 \min_{\pi^{\epsilon} \in \Pi^{\epsilon}} \min_{\rho \in \Omega} \max_{\pi \in \Pi}\sum_{s,a,h}\frac{(p_h^{\pi}(s,a) - p_h^{\pi^{\epsilon}}(s,a))^2}{\rho_h(s,a)(\Delta(\pi)-\Delta(\pi^{\epsilon}) + \epsilon)^2}\right ) \log \nicefrac{1}{\delta}.$$ \cite{DBLP:journals/corr/abs-2311-05638lowerbound} also showed that the \textsc{Pedel} algorithm \cite{DBLP:conf/nips/WagenmakerJ22} closely matches this lower bound. However, \textsc{Pedel} is known to be computationally intractable, and the problem of designing an efficient algorithm with matching lower bounds remains open.
\end{itemize}

Our results for T-MDPs are distinct from the works in the literature because our bounds avoid the dependence on the state-visitation probability ($q(\sigma)$ in our work)---which can be arbitrarily small. 
The only other work we are aware of that avoids this dependence is that of \cite{NearInstanceOptimalPAC}. However, the authors focus on {\em deterministic} MDPs, where the state-transitions are deterministic and the rewards are stochastic.

\section{Code and Hyperparameter Settings}
\label{app:code-hyperparameter}

We have selected $\delta = 0.05$ for all experiments in the PAC paradigm. Horizon dependence in \textsc{Bpi-Ucrl} and \textsc{Mdp-GapE} removed as rewards are only terminal in poker. All experiments use a discount factor of 1. Kuhn and Leduc Poker can be run locally. RBT required a server with 96 cores and a total of 120 GiB RAM, where each core was an Intel(R) Xeon(R) Gold 6342 CPU @ 2.80GHz. Most of the steps cannot be parallelized, but steps such as expected utility and optimal best response computation can. RBT requires between 50-100 GB of RAM to run. 

\begin{table}[h]
    \centering
    \caption{Hyperparameters and implementation details of algorithms in test suite.}
    \Description{Hyperparameters and implementation details of algorithms in test suite.}
    \label{tab:algorithms}
    \begin{tabular}{p{2cm} ccc ccc p{6cm}}
        \toprule
        Algorithm & \multicolumn{6}{c}{Hyperparameters} & Implementation Details and Comments \\
        \cmidrule(lr){2-7}
        & \multicolumn{2}{c}{Kuhn Poker} & \multicolumn{2}{c}{Leduc Poker} & \multicolumn{2}{c}{RBT} & \\
        \cmidrule(lr){2-3}\cmidrule(lr){4-5}\cmidrule(lr){6-7}
        & $\mathtt{x}$ & $\mathtt{o}$ & $\mathtt{x}$ & $\mathtt{o}$ & $\mathtt{x}$ & $\mathtt{o}$ & \\
        \midrule
        \textsc{Bpi-Ucrl} & - & - & - & - & - & - & We use the following values for the exploration term $\beta(\delta, n_t)$ in our code: $\beta_{\text{cnt}} = \ln\left(\frac{1}{\delta}\right)$, $\beta_r = \frac{1}{2}\left(\beta_{\text{cnt}} + \ln(e(1 + n_t)\right)$, and $\beta = \sqrt{\frac{\beta_r}{n_t}}$.\\
        \textsc{Lucb-T} &$C=0.1$ &$C=1.0$ & $C=1.0$& $C=1.0$&$C=1.0$ &$C=1.0$ & We use a simplified confidence bound expression in our code: $\beta(m, \delta) = \sqrt{\frac{C}{m}\ln{\frac{t}{\delta}}}$.\\
        \textsc{Mccfr} &$\epsilon = 0.1$ &$\epsilon = 0.1$ &$\epsilon = 0.1$ & $\epsilon = 0.1$& $\epsilon = 0.2$& $\epsilon = 0.2$& Implementation as described in literature. \\
        \textsc{MDP-GapE} & - & -  & - & - & - & - & We use the following values for the exploration terms $\beta_r(\delta, n_t)$ and $\beta_p(\delta, n_t)$ in our code: $\beta_{\text{cnt}} = \ln\left(\frac{3 \cdot (6B)^H}{\delta}\right)$, $\beta_r = \beta_{\text{cnt}} + \ln(1 + n_t) + 1$, and $\beta_p = \beta_{\text{cnt}} + (B-1)\left(1 + \ln\left(1 + \frac{n_t}{B-1}\right)\right)$.\\
        \textsc{Opf} & $K=0.5$& $K=0.5$& $K=5.0$& $K=5.0$& $K=5.0$& $K=5.0$& Implementation as described in literature. \\
        \textsc{Ucb-T} & $C=0.1$& $C=0.1$&$C=1.0$ & $C=1.0$& $C=1.0$&$C=1.0$ & We use a simplified confidence bound expression in our code: $\beta(m, \delta) = \sqrt{\frac{C}{m}\ln{\frac{t}{\delta}}}$. \\
        \textsc{Uct} & $C=0.5$&$C=0.5$ & $C=5.0$& $C=5.0$& $C=2.0$& $C=2.0$& Implementation as described in literature.\\
        \bottomrule
    \end{tabular}
\end{table}

\section{The PU Procedure}
\label{app:pu-procedure}

In this section, we provide the detailed algorithm and correctness argument for the PU procedure described in Section~\ref{subsec:algorithms-lucb-t}. 
The procedure computes the policy $\pi^{t}_{U} \eqdef \argmax_{\pi \in \Pi} \text{ucb}^{t}(\pi, \delta_{U}(t))$ efficiently.

\subsection{Notation and Definitions}

For a state $s \in \mathcal{S}$ and action $a \in \mathcal{A}$, let $\mathcal{C}(s, a)$ denote the set of children states reachable from $s$ by taking action $a$. That is, $\mathcal{C}(s, a) = \{s' \in \mathcal{S} \cup \Sigma : p(s, a, s') > 0\}$.
We assume that the empirically best policy $\pi^t_1$ and its empirical values $\widehat{V}^{\pi_1^t}(\cdot)$ have been precomputed for all states.  
Recall that the Upper Confidence Bound (UCB) of a policy $\pi$ is the sum of its empirical value $\widehat{V}^{\pi}$ and the confidence width $\beta(n^t(\pi), \delta_{U}(t))$. 
Observe that the problem of computing $\pi^t_U$ is a maximisation problem over these two terms of empirical value and confidence width. 
Let $\sigma^t_{\min}(\pi) \eqdef \argmin_{\sigma \in X(\pi)} n^t(\sigma)$ be the terminal state that determines the width for policy $\pi$.
If $\sigma^t_{\min}(\pi^t_U)$ is somehow known, we can compute $\pi^t_U$ by maximising over the empirical value of all policies that reach $\sigma^t_{\min}(\pi^t_U)$. This is easily done by modifying $\pi^t_1$ to select actions that reach $\sigma^t_{\min}(\pi^t_U)$, while leaving all other actions unchanged. 
Below we specify a recursive procedure to find $\sigma^t_{\min}(\pi^t_U)$. 

First, let us extend the notation for previously defined quantities at the root of the T-MDP to a subtree rooted at state $s$. 
Let $X(\pi, s) = \{\sigma \in X(\pi): \sigma$ is reachable from $s$ by following policy $\pi\}$. 
Let $n^t(\pi, s) \eqdef \min_{\sigma \in X(\pi, s)} n^t(\sigma)$ be the minimum play count of policy $\pi$ in the subtree rooted at $s$. 
Next, consider a state-action pair $(s, a)$ with children $\mathcal{C}(s, a) = \{s'_1, \dots, s'_m\}$. Assume that we have already computed the policy with the highest UCB for each subtree rooted at $s'_j$ for all $j \in \{1, \dots, m\}$, in isolation to the rest of the game tree. 
Let $\pi^{t, j}_U$ denote this `locally' optimistic policy for the subtree rooted at $s'_j$, defined only for states in the subtree. 
Then we can compute the action that maximises the UCB for state $s$ by maximising over all $m$ possible assignments of policies to the children, where in each assignment, one child $s'_k$ follows the locally optimistic policy $\pi^{t, k}_U$ and all other children follow the empirically best policy $\pi^t_1$. 
Essentially, the argument is that since $\sigma^t_{\min}(\pi^t_U)$ belongs to exactly one of the subtrees rooted at $s'_j$, actions in the other subtrees can be chosen to maximise the empirical value of the policy. 
Thus we have,   
$$ \pi^t_U(s) = \argmax_{a \in \mathcal{A}} \max_{s_k \in \mathcal{C}(s,a)} \left[ \sum_{s_j \in \mathcal{C}(s,a), j \ne k} \widehat{p}(s,a,s_j)\widehat{V}^{\pi_1}(s_j) + \widehat{p}(s,a,s_k)\widehat{V}^{\pi^{t, k}_U}(s_k) + \beta(n^t(\pi^{t, k}_U, s_k), \delta_{U}(t)) \right], $$
and
$$ \pi^t_U(s') = \begin{cases} \pi^t_1(s') & \text{if } s' \in \text{subtree rooted at } s'_j \text{ for all } j \ne k, \\
\pi^{t, k}_U(s') & \text{if } s' \in \text{subtree rooted at } s'_k. \end{cases} $$


\subsection{Algorithm}

The algorithm \textsc{FindMaxUCB}$(s)$ returns a tuple $(U, \sigma, n)$, where $U$ is the maximal UCB value for the sub-tree at $s$, $\sigma$ is the terminal state minimising the count reachable from $s$ by following the UCB policy $\pi^t_U$, and $n$ is that minimum count.

\begin{algorithm}[H]
\caption{PU Procedure: \textsc{FindMaxUCB}$(s)$}
\label{alg:pu_detailed}
\begin{algorithmic}[1]
    \STATE \textbf{Input:} State $s$.
    \STATE \textbf{Global:} Precomputed $\widehat{V}^{\pi_1}(\cdot)$ and $\widehat{p}(\cdot)$. 
    \IF{$s \in \Sigma$}
        \RETURN $(\widehat{V}^{\pi_1}(s) + \beta(n^t(s), \delta_{U}(t)), s, n^t(s))$.
    \ENDIF
    \STATE $U^* \gets -\infty$.
    \FOR{each action $a \in \mathcal{A}$}
        \STATE Let $\mathcal{C}(s, a) = \{s'_1, \dots, s'_m\}$.
        \FOR{$k = 1$ to $m$}
            \STATE $(U_k, \sigma_k, n_k) \gets \textsc{FindMaxUCB}(s'_k)$
            \STATE $U_{curr} \gets \widehat{V}^{\pi_1}(s) + \widehat{p}(s, a, s'_k)(U_k - \beta(n_k, \delta_U(t)) - \widehat{V}^{\pi_1}(s'_k)) + \beta(n_k, \delta_U(t))$.
            \IF{$U_{curr} > U^*$}
                \STATE $U^* \gets U_{curr}$.
                \STATE $n^* \gets n_k$.
                \STATE $\sigma^* \gets \sigma_j$.
            \ENDIF
        \ENDFOR
    \ENDFOR
    \RETURN $(U^*, \sigma^*, n^*)$.
\end{algorithmic}
\end{algorithm}


\end{document}